\documentclass{article}

\usepackage{microtype}
\usepackage{graphicx}
\usepackage{subcaption}
\usepackage{booktabs} 
\usepackage{multirow}
\usepackage{makecell}
\usepackage{wrapfig}
\usepackage{amsmath}      
\usepackage{hyperref}
\usepackage{enumitem}

\usepackage[accepted]{icml2026}

\usepackage{amsmath}
\usepackage{amssymb}
\usepackage{mathtools}
\usepackage{amsthm}
\definecolor{rank1}{RGB}{255, 0, 0}    
\definecolor{rank2}{RGB}{0, 255, 0}    
\definecolor{rank3}{RGB}{179, 149, 189}  
\definecolor{rank4}{RGB}{255, 165, 0}  
\definecolor{purple}{RGB}{0, 128, 128}  
\usepackage{bbding}

\usepackage[capitalize,noabbrev]{cleveref}

\theoremstyle{plain}
\newtheorem{theorem}{Theorem}[section]
\newtheorem{proposition}[theorem]{Proposition}

\theoremstyle{definition}

\theoremstyle{remark}
\newtheorem{remark}[theorem]{Remark}
\usepackage[disable,textsize=tiny]{todonotes}
\renewcommand{\c}[1]{\mathcal{#1}}
\icmltitlerunning{Conformal Reliability: A New Evaluation Metric for Conditional Generation}

\begin{document}

\twocolumn[
  \icmltitle{Conformal Reliability: A New Evaluation Metric for Conditional Generation}




    \begin{icmlauthorlist}
      \icmlauthor{Yachen Gao}{istbi,sii}
      \icmlauthor{Xinwei Sun}{sds} 
      \icmlauthor{Yikai Wang}{ntu}
      \icmlauthor{Ye Shi}{shanghaitech}
      \icmlauthor{Jingya Wang}{shanghaitech}
      \icmlauthor{Jianfeng Feng}{istbi,sds}
      \icmlauthor{Yanwei Fu}{sii,sds}
    \end{icmlauthorlist}
    
    \icmlaffiliation{istbi}{Institute of Science and Technology for Brain-Inspired Intelligence, Fudan University, Shanghai, China}
    \icmlaffiliation{sii}{Shanghai Innovation Institute, Shanghai, China}
    \icmlaffiliation{sds}{School of Data Science, Fudan University, Shanghai, China}
    \icmlaffiliation{ntu}{Nanyang Technological University, Singapore}
    \icmlaffiliation{shanghaitech}{School of Information Science and Technology, ShanghaiTech University, Shanghai, China}

  \icmlcorrespondingauthor{Xinwei Sun}{sunxinwei@fudan.edu.cn}
  
  \icmlcorrespondingauthor{Yanwei Fu}{yanweifu@fudan.edu.cn}

  \icmlkeywords{Uncertainty Evaluation, Conformal Prediction, Reliability, Conditional Generation Model}

  \vskip 0.3in
]



\printAffiliationsAndNotice{}  

\begin{abstract}
	 Conditional generative models have recently achieved remarkable success in various applications. However, a suitable metric for evaluating the reliability of these models, which takes into account their inherent uncertainty, is still lacking. Existing metrics, which typically assess a single output, may fail to capture the variability or potential risks in generation. In this paper, we propose a novel evaluation metric called \emph{reliability score} based on conformal prediction, which measures the worst-case performance within the prediction set at a pre-specified confidence level. However, computing this score is challenging due to the high-dimensional nature of the output space and the nonconvexity of both the metric function and the prediction set.
    To efficiently compute this score, we introduce Conformal ReLiability (CReL), a framework that can \textbf{(i)} construct the prediction set with desired coverage; and \textbf{(ii)} accurately optimize the reliability score {within the constructed prediction set}. We provide theoretical results on coverage and demonstrate empirically that our method produces more informative prediction sets than existing approaches. Experiments on synthetic data and the image-to-text {and text-to-image} tasks further demonstrate the interpretability of our new metric, and the validity and effectiveness of our computational framework.
    Source code can be found at \url{https://ggc29.github.io/CReL/}.
\end{abstract}

\section{Introduction}
\label{sec.intro}


Conditional generative models map a given input condition (\emph{e.g.}, a textual prompt) to high-dimensional outputs (\emph{e.g.}, images, sequences). Powered by large-scale datasets and models, this paradigm underpins breakthroughs in diverse domains, from text-to-image synthesis \cite{reed2016generative} and drug discovery \cite{bian2021generative} to autonomous systems \cite{gasparyan2024diffusion}. Yet, despite their remarkable generative prowess, a fundamental question remains largely unanswered: \emph{how trustworthy are these models when deployed in the real world}?

Current metrics such as the CLIP score~\cite{radford2021learning} typically assess a single generated output. While effective for measuring \emph{average} capability, this paradigm fundamentally ignores the stochastic nature of generative models, masking potential risks hidden within the distribution of plausible outputs. A model may achieve a high average score by frequently generating high-quality samples, yet still harbor a non-negligible probability of producing catastrophic failures. For instance, in an image-to-text task, a model might correctly caption an image as ``A man playing the guitar" in most cases, but under different sampling seeds, it could plausibly generate ``A man pointing a gun" due to visual ambiguity or hallucinations. In safety-critical applications~\cite{gawlikowski2023survey,he2023survey}, reliability is defined not by how good the model \emph{can} be, but by how bad it \emph{might} be. Therefore, single-output evaluation is insufficient; a rigorous framework must quantify the \emph{worst-case performance} among all statistically plausible outputs.



{To provide such an assessment, we propose the \emph{reliability score}, which evaluates the worst-case value of a user-specified similarity metric~$\rho$ within a calibrated prediction set at confidence level~$1-\alpha$. In classical Conformal Prediction (CP), valid models are commonly compared using the geometric size (volume) of the prediction set---a notion of \emph{sharpness}. However, in high-dimensional generative tasks, set volume is not only computationally intractable but also misaligned with the evaluation metric~$\rho$: geometric size provides no information about how close the set remains to the ground truth under the metric of interest.
To bridge this gap, our reliability score replaces geometric volume with a \emph{metric-aware measure of sharpness}, defined as the worst achievable performance under~$\rho$ among all statistically plausible outputs. This quantity directly reflects the model's performance \emph{floor} at confidence level~$1-\alpha$. Yet, computing this score presents a significant challenge: directly optimizing the worst-case performance is intractable due to the high dimensionality of generative outputs, as well as the nonconvexity of both the similarity metric $\rho$ and the prediction set serving as the constraint.}


For the regression task with multi-dimensional output, one can apply directional quantile regression (DQR) \cite{kong2012quantile, paindaveine2011directional}, which can yield a convex prediction set. However, this set may be overly conservative, as it must account for extreme cases in order to guarantee coverage. Besides, the prediction set may not be informative since the true set may not be convex. Other methods \cite{xuconformal,javanmard2025prediction,feldman2023calibrated} leveraged conformal calibration, which can alleviate some of these issues. For instance, \cite{xuconformal} modeled outputs as Gaussian mixtures or projected them into lower-dimensional latent spaces. Yet, none tackle the optimization of worst-case reliability under a general similarity metric—a problem that is both computationally and statistically intractable in high-dimensional output spaces.


To address these limitations, we introduce \emph{Conformal ReLiability} (CReL), a principled computational framework for quantifying the reliability of conditional generative models. At its core, CReL projects high-dimensional outputs into a structured latent space, wherein both conformal calibration and optimization are performed. Compared to existing approaches that calibrate in the original output space, our method enjoys better computational efficiency and optimization tractability for computing the reliability score. Theoretically, we establish that the resulting prediction set satisfies the target guarantee. Moreover, reformulating the objective over the latent-space prediction set transforms the problem into an optimization program with convex constraints, on which the projection operation can be efficiently computed using linear programming. This formulation enables the employment of projected gradient descent, endowed with provable global convergence guarantees for computing the reliability score. We demonstrate the validity and effectiveness of our framework on synthetic data
{and on both the image-to-text and text-to-image} tasks.

To summarize, our contributions are:
 \begin{itemize}[leftmargin=8pt, itemsep=0pt]
    \item \emph{Reliability-Centric Metric}: We introduce the reliability score to quantify the worst-case performance of conditional generative models at a specified confidence level, addressing risks overlooked by single-sample evaluations.
    \item \emph{CReL Framework}: we develop Conformal ReLiability (CReL), a computational framework that can efficiently and accurately compute the reliability score. 
    \item \emph{Theoretical Guarantee}: We show that the prediction set generated by our method meets the coverage guarantee. Additionally, we empirically find that the prediction set given by our procedure has much smaller or comparable size to other methods, highlighting the effectiveness of our approach in delivering more informative calibration.
    \item \emph{Empirical Validation}: 
    We evaluate our methods {on synthetic data and on both the image-to-text and text-to-image tasks.}
    For synthetic data, we validate the effectiveness of our computational framework. In the image-to-text {and text-to-image tasks}, we demonstrate that our new metric provides more interpretable evaluations compared to traditional single-output metrics.
\end{itemize}
\section{Related Works}

\textbf{Evaluating condition generative models.} Typical metrics for evaluating conditional generative models include \emph{Structural Similarity Index Measure} (SSIM) \cite{wang2004image}, \emph{Contrastive Language-Image Pretraining} (CLIP) \cite{radford2021learning}, and others. Specifically, SSIM evaluates the structural similarity between generated images and reference images, while CLIP measures the similarity between generated images and corresponding textual descriptions by projecting both modalities into a shared embedding space and calculating the cosine similarity between them. Other popular metrics, like BERT-similarity \cite{kenton2019bert} and \emph{Fréchet Inception Distance} (FID) \cite{heusel2017gans}, also rely on embedding models to quantify how well the generated samples match the given conditions. 
Recent studies have refined evaluation through conditional metrics \cite{benny2021evaluation}, holistic benchmarks \cite{lee2023holistic}, and advanced diversity scores \cite{jalali2025information, ospanov2025scendi}. However, these methods focus primarily on average-case quality or aggregate diversity distributions. They do not capture \textit{reliability}. While metrics like CLIP ignore variability and distributional scores obscure failure risks, our work fills this gap by explicitly quantifying the reliability of generative models.

\textbf{Conformal prediction for multi-dimensional data.} Many works have been done on this topic recently. \cite{kong2012quantile, bovcek2017directional} proposed directional quantile regression (DQR) that estimated quantile hyperplanes for multiple directions in the response space. However, this approach may remain conservative and uninformative, since the prediction set is constrained to be convex and requires estimating extreme quantiles to ensure coverage. While the vector quantile regression \cite{carlier2016vector} can produce non-convex sets, it restricts the output to linearly depend on the input. Other attempts include \cite{messoudi2022ellipsoidal, xuconformal} that constructed the prediction set as an ellipsoidal set, {\cite{wang2022probabilistic, johnstone2022exact, gibbs2025conformal, plassier2024probabilistic} that modeled the conditional distribution of the output, and \cite{plassier2025rectifying} which rectifies scalar conformity scores via quantile regression to improve conditional coverage.} In particular, \cite{feldman2023calibrated} proposed to map the high-dimensional response into a lower-dimensional latent space, which can alleviate the conservativeness problem when applying DQR. PCP~\cite{wang2022probabilistic} is especially relevant for conditional generative models, but it is not designed for the reliability-score optimization considered here; its default multidimensional calibration can be more conservative than a joint latent-region calibration, as we empirically compare in Appendix~\ref{append.pcp_comparison}.
While standard CP methods construct prediction sets, our goal is to evaluate a reliability score. This requires solving an optimization problem that is intractable under previous frameworks. For instance, \cite{feldman2023calibrated} generates non-convex prediction sets in the output space, rendering optimization infeasible. In contrast, our novelty lies in leveraging LGMs to construct a \textit{convex} set within the latent space. This convexity ensures that optimizing the reliability score is computationally tractable.

\section{Methodology} \label{sec.method}

We aim to evaluate the reliability of a target model \( f: \mathbb{R}^p \to \mathbb{R}^d \) in conditional regression tasks, with respect to a user-defined similarity metric \( \rho \), where higher values of \( \rho \) indicate better performance.  Suppose our data has \( n \) independent and identically distributed (\emph{i.i.d.}) samples \(\{(X_i, Y_i)\}_{i=1}^n\), where \( X \in \mathbb{R}^p \) represents the input condition (\emph{e.g.}, prompt) and \( Y \in \mathbb{R}^d \) denotes the ground-truth output. For each $X_i$, the target model \( f \) generates the output \(\widehat{Y}_i := f(X_i)\).

Our goal is to assess the reliability of $f$ for a new observation $X_{n+1}$ with respect to $\rho$. Given a confidence level $\alpha \in (0,1)$, we aim to quantify the worst-case performance at confidence level $1- \alpha$:
\begin{equation}
    \label{eq.reliability_intro}
    \begin{aligned}
        &\min_{\widehat{Y} \in C_{\c{Y}}(X_{n+1})} \rho(\widehat{Y}, \mathrm{GT}_{n+1}), \\
        &\text{such that~} \mathbb{P} \left( \widehat{Y}_{n+1} \in C_{\c{Y}} (X_{n+1}) \right) \ge 1-\alpha,
    \end{aligned}
\end{equation}
where $\mathrm{GT}_{n+1}$ denotes the ground truth response, \emph{i.e.}, $X_{n+1}$ (\emph{e.g.}, CLIP similarity between the generated text and the image) or $Y_{n+1}$ (\emph{e.g.}, BERT similarity between the true text and the generated image). This metric provides a robust, uncertainty-aware lower bound on performance, evaluating the reliability of the method in the worst-case allowed by the confidence level $1- \alpha$.

\begin{figure*}[ht]
\centering
\includegraphics[width=0.9\textwidth]{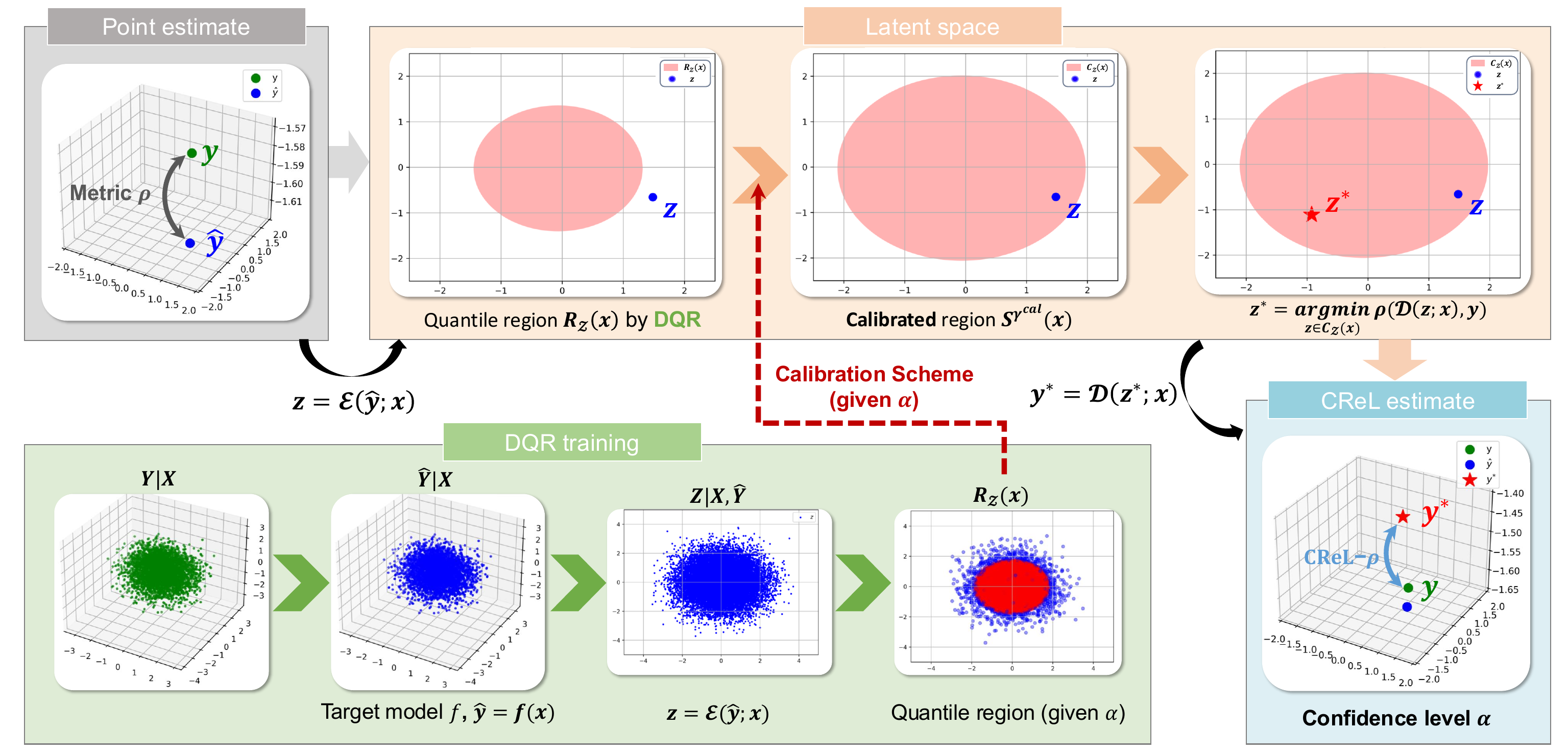}
\vspace{-0.1in}
\caption{
Illustration of our procedure. 
During DQR training, the latent generative model maps the target model’s prediction space \( \widehat{\mathcal{Y}} \) to a latent space \( \mathcal{Z} \), where DQR constructs the quantile region \( R_\mathcal{Z}(x) \) (\ref{eq.dqr_region}) using the loss (\ref{eq.dqr}). Then, CReL applies the calibration to adjust \( R_\mathcal{Z}(x) \), such that the calibrated region \( S^{\gamma_{\mathrm{cal}}}(x) \) satisfies the marginal coverage at level $1-\alpha$. The final reliability metric \( \text{CReL-}\rho \) is then computed by optimizing (\ref{eq.obj_new}). 
\label{fig.crel-pipline}
}
\vspace{-0.15in}
\end{figure*}

Because $\widehat{Y}$ is normally high-dimensional, applying directional quantile regression (DQR) can result in overly conservative prediction sets, which may hinder an accurate assessment of reliability. To address this issue, we introduce \emph{Conformal ReLiability} (CReL), a conformal framework built on the latent generative model, which allows both the calibration procedure and the optimization of $\rho(\cdot, \cdot)$ to be performed in a much lower-dimensional latent space (as illustrated in Fig.~\ref{fig.crel-pipline}). This approach yields more informative prediction sets and, consequently, more accurate reliability evaluation. 

The rest of this section is organized as follows: Section~\ref{sec.conformal_cali} first introduces our conformal procedure on the latent space. Then, we will show in Section~\ref{sec.theory} that such a procedure can meet the coverage guarantee as long as the latent generative model is trained well. Finally, Section~\ref{sec.optimize} introduces our optimization methods for computing the reliability score.

\subsection{Conformal Calibration}
\label{sec.conformal_cali}

The key insight of our framework is to learn an embedding space $\mathcal{Z}$ via a latent generative model, enabling conformal calibration in such a lower-dimensional space and thereby significantly reducing the over-conservativeness present in the original output space. Within this latent space, we construct \( C_\mathcal{Z}(X_{n+1}) \) through conformal calibration following DQR models \cite{kong2012quantile}, transforming the original non-convex reliability optimization problem into a computationally tractable convex-constrained optimization.


We begin by partitioning the training indices into three folds: $\mathcal{I}_{\mathrm{lgm}}$ for training the latent generative model, $\mathcal{I}_{\mathrm{dqr}}$ for training the DQR model, and $\mathcal{I}_{\mathrm{cal}}$ for final conformal calibration. We denote the corresponding datasets as $\mathcal{D}_{\mathrm{lgm}}$, $\mathcal{D}_{\mathrm{dqr}}$, and $\mathcal{D}_{\mathrm{cal}}$, respectively.

\textbf{Step 1: Training the latent generative model.} The model is composed of an encoder that transforms $\widehat{Y}|X=x$ into a latent distribution $Z_x$ and constructs $C_{\mathcal{Z}}(X)$, followed by a decoder that gives the prediction set $C_{\mathcal{Y}}(X):=\mathcal{D}ec(C_{\mathcal{Z}}(X),X)$. It is ensured in Theorem~\ref{thm.coverage} that $C_{\mathcal{Y}}(X)$ meets the coverage guarantee, as long as the latent generative model (LGM) can well recover the distribution $\widehat{Y}|X$. 
Typical choices of generative models satisfying this property include the \emph{Variational Autoencoder} (VAE) \cite{khemakhem2020variational, kingma2013auto} or the stable diffusion model \citep{rombach2022high}. To this end, we fit an LGM on $\{(X_i, \widehat{Y}_i)\}_{i \in \mathcal{I}_{\mathrm{lgm}}}$. After training, we can obtain an encoder $\mathcal{E}(\cdot, \cdot): \widehat{\mathcal{Y}} \times \mathcal{X} \mapsto \mathcal{Z}$ and the decoder $\mathcal{D}ec(\cdot, \cdot): \mathcal{Z} \times \mathcal{X} \mapsto \mathcal{Y}$. To align the encoder and decoder with the similarity metric $\rho$, we replace the mean square loss $\Vert \widehat{Y} - \mathcal{D}ec(\mathcal{E}(\widehat{Y},X),X) \Vert_2^2$ with $\rho(\widehat{Y}, \mathcal{D}ec(\mathcal{E}(\widehat{Y},X),X))$ during training. This LGM training is an offline preprocessing step and is empirically lightweight; detailed time and memory costs are reported in Appendix~\ref{append.lgm_training_cost}.



\textbf{Step 2: Fitting the DQR model.} After training the LGM, we use the encoder $\mathcal{E}$ to obtain $(Z_i, X_i)$ from $(\widehat{Y}_i, X_i)$ for each $i \in \mathcal{I}_{\mathrm{dqr}}$, where $Z_i := \mathcal{E}(\widehat{Y}_i; X_i) \in \mathbb{R}^r$. 
Then, we apply the DQR \cite{kong2012quantile} on $\{Z_i, X_i\}_{i\in \mathcal{I}_{\mathrm{dqr}}}$ to obtain an initialized region $R_{\mathcal{Z}}(x)$ for any $x$ in the calibration dataset. Specifically, given a direction $\mathbf{u} \in \mathbb{S}^{r-1}:= \{ \mathbf{u} \in \mathbb{R}^r : \|\mathbf{u}\|_2 = 1 \}$, DQR models the quantiles of a response vector in $\mathbf{u}$, allowing us to estimate the $\alpha$-th quantile for any input $X_i$ by projecting the response vector $Z_i$ onto $\mathbf{u}$. Specifically, DQR minimizes the following objective~(\ref{eq.dqr}) to estimate the $\alpha$-th quantile of the projection $\mathbf{u}^\top Z_i$ given $X_i$:
\begin{equation}  
\label{eq.dqr}  
\widehat{\beta} = \frac{1}{|\mathcal{D}_{\mathrm{dqr}}|} \sum_{i \in \mathcal{D}_{\mathrm{dqr}}} \zeta_{\alpha} \left( \mathbf{u}^\top Z_i, f_{\boldsymbol{\beta}}(X_i, \mathbf{u}) \right),
\end{equation} 
where the pinball loss $\zeta_{\alpha}(y, \hat{y})$ is defined as:
\begin{equation*}  
\label{eq.dqr_loss}  
\zeta_{\alpha}(y, \hat{y}) = 
\begin{cases}
\alpha (y - \hat{y}) & \text{if } y - \hat{y} > 0, \\
(1 - \alpha) (\hat{y} - y) & \text{otherwise}.
\end{cases}
\end{equation*}
Here, $f_{\boldsymbol{\beta}}(X_i, \mathbf{u})$ represents the regression function parameterized by $\boldsymbol{\beta}$, which predicts the value of the $\alpha$-th quantile for the projected data.
For each direction $\mathbf{u}$, DQR defines a convex half-space $\mathbb{H}_u^+(x) = \left\{ z \in \mathbb{R}^r : \mathbf{u}^\top z \geq f_{\boldsymbol{\beta}}(x, \mathbf{u}) \right\}$. The quantile region is then obtained by taking the intersection of all such half-spaces across all directions $\mathbf{u} \in \mathbb{S}^{r-1}$:
\begin{equation}
    \label{eq.dqr_region}
    R_\mathcal{Z}(x) = \bigcap_{\mathbf{u} \in \mathbb{S}^{r-1}} \mathbb{H}_u^+(x),
\end{equation}
which yields a convex region in the latent space $\mathcal{Z}$.
The DQR fitting procedure is illustrated in Fig.~\ref{fig.crel-pipline}, where the quantile region $R_\mathcal{Z}(x)$ constructed in \(\mathcal{Z}\) during training is marked in red.

\textbf{Step 3: Calibration.} When the latent dimension $r > 1$, $R_{\mathcal{Z}}(X)$ covers strictly less than $1-\alpha$ of the distribution, due to the intersection of the half-spaces. To address this, we perform calibration to construct $C_{\mathcal{Z}}(X)$ on $\mathcal{D}_{\mathrm{cal}}:=\{(X_i,Z_i)\}_{i\in\mathcal{I}_{\mathrm{cal}}}$, such that $\mathbb{P}\left(Z_{n+1} \in C_\mathcal{Z}(X_{n+1})\right)\geq 1-\alpha$. Here, DQR provides a convex latent base region, while the additional CReL calibration is applied jointly to the entire prediction region rather than coordinate by coordinate. This joint calibration targets not only validity but also informativeness, seeking a compact prediction set among those satisfying the desired coverage. To this end, we first define a base region:
\begin{equation}
\label{eq.basic_region}
    S^\gamma(x)=\left\{z\in\mathbb{R}^r:\min_{a \in R_\mathcal{Z}(x)}d(a,z)\leq\gamma\right\},
\end{equation}
where $d(\cdot,\cdot)$ a distance function. The goal is to find a $\gamma_{\mathrm{cal}}$ such that $C_{\mathcal{Z}}(X) = S^{\gamma_{\mathrm{cal}}}(X)$. To achieve the target coverage, we expect $\gamma_{\mathrm{cal}}$ to be the $(1-\alpha)$-quantile of the distribution function over $\min_{a \in R_\mathcal{Z}(X)}d(a, \cdot)$. We first obtain the coverage rate $\gamma_\text{init}$ of the uncalibrated base regions (\emph{i.e.}, $S^0(X_i)=R_\mathcal{Z}(X_i)$):
\begin{equation}
    \label{c_init}
    \gamma_{\mathrm{init}}=\frac{1}{|\mathcal{D}_{\mathrm{cal}}| }\left|\{Z_i:Z_i \in R_\mathcal{Z}(X_{i}),i \in \mathcal{I}_{\mathrm{cal}}\right|,
\end{equation}
where $Z_i=\mathcal{E}(\widehat{Y}_i;X_i)$ for each $i \in \mathcal{I}_{\mathrm{cal}}$. Since the coverage of $R_{\mathcal{Z}}(X_i)$ is strictly less than $1-\alpha$ when $r>1$, we would have $\gamma_{\mathrm{init}} \leq 1-\alpha$ as long as the sample size of the calibration data, \emph{i.e.}, $|\mathcal{I}_{\mathrm{cal}}|$ is sufficiently large. That means, to achieve coverage guarantee, we should grow the base quantile region by computing $\gamma_{\mathrm{cal}}$  as
\begin{equation}
\label{eq.score_low}
\begin{split}
    E_i^+ &= \min_{a \in R_\mathcal{Z}(X_{i})}d(a,Z_{i}), \quad \forall i\in \mathcal{I}_{\mathrm{cal}}, \\
    \gamma_{\mathrm{cal}} &:= \lceil(|\mathcal{D}_{\mathrm{cal}}|+1)(1-\alpha)\rceil\text{-th smallest} \\
    &\quad \text{ value of } \{E_i^+:i \in \mathcal{I}_{\mathrm{cal}}\}.
\end{split}
\end{equation}
Finally, the calibrated quantile region $C_{\mathcal{Z}}(X)$ is given by $S^{\gamma_{\mathrm{cal}}}(X)$ in \eqref{eq.basic_region}.  

\begin{remark}
Unlike \cite{feldman2023calibrated}, which performs calibration in the output space $\mathcal{Y}$, we calibrate directly in the latent space $\mathcal{Z}$. This is motivated by computational and optimization considerations. Specifically, \cite{feldman2023calibrated} calibrates on $R_{\mathcal{Y}}(X) = \mathcal{D}ec(R_{\mathcal{Z}}(X),X)$. Since $R_{\mathcal{Y}}(X)$ may be non-convex, calibration requires discretization, which can be computationally expensive, particularly in high-dimensional spaces. In contrast, $R_{\mathcal{Z}}(X)$ is convex, allowing the core $E_i^{+}$ to be computed efficiently via linear programming. Please refer to Appendix~\ref{append.compute_complex} for more details about computational complexity. 
Furthermore, direct optimizing \eqref{eq.reliability_intro} over $C_{\mathcal{Y}}(X_{n+1})$ can be intractable, as both $C_{\mathcal{Y}}(X_{n+1})$ and $\rho(\cdot, \cdot)$ are non-convex. In contrast, as we will show, the optimization can be reformulated into one that optimizes in the latent space $C_{\mathcal{Z}}(X_{n+1}) = S^{\gamma_{\mathrm{cal}}}(X_{n+1})$. Because this space is convex and compact, the optimization becomes more tractable.
\end{remark}

\textbf{Step 4: Constructing $C_{\mathcal{Y}}(X_{n+1})$.} Our final prediction set is given by $C_{\mathcal{Y}}(X_{n+1}):=\mathcal{D}ec(S^{\gamma_{\mathrm{cal}}}(X_{n+1}),X_{n+1})$. Alg.~\ref{alg.conformal-reliability} summarizes the overall procedure for calibration.

{\renewcommand{\baselinestretch}{0.8}\normalsize
\begin{algorithm}[t]
\small
\caption{Conformal ReLiability}
\label{alg.conformal-reliability}

\textbf{Input:} 
Dataset $\{(X_i,Y_i)\}_{i=1}^n$, target model $f$, similarity metric $\rho$, nominal confidence level $\alpha \in (0,1)$, encoder $\mathcal{E}(\cdot,\cdot)$ and decoder $\mathcal{D}ec(\cdot,\cdot)$ in the latent generative model, DQR algorithm, a test point $X_{n+1}$.

\textbf{Output:} $C_{\mathcal{Y}}(X_{n+1})$. 

\textbf{Training time:}
\begin{algorithmic}[1]
\STATE Split $\{1,\cdots,n \}$ into three disjoint sets $\mathcal{I}_{\mathrm{lgm}}$, $\mathcal{I}_{\mathrm{dqr}}$, $\mathcal{I}_{\mathrm{cal}}$.
\STATE Train a latent generative model on $\{(X_i, \widehat{Y}_i)\}_{i \in \mathcal{I}_{\mathrm{lgm}}}$, where $\widehat{Y}_i:=f(X_i)$ for each $i=1,...,n$.
\STATE Fit a DQR model on $\{(X_i, Z_i)\}_{i \in \mathcal{I}_{\mathrm{dqr}}}$, where $Z_i:=\mathcal{E}(\widehat{Y}_i;X_i)$, and to obtain $R_\mathcal{Z}(x)$ \eqref{eq.dqr_region}.
\end{algorithmic}

\textbf{Calibrating time:}
\begin{algorithmic}[1]
\STATE Compute the coverage of the uncalibrated quantile regions on $\{(X_i, Z_i)\}_{i \in \mathcal{I}_{\mathrm{cal}}}$ via (\ref{c_init}).
\STATE Compute $E_i^+$ and obtain $\gamma_{\mathrm{cal}}$ according to \eqref{eq.score_low}.
\end{algorithmic}

\textbf{Test time:}
\begin{algorithmic}[1]
\STATE Obtain a base quantile region $R_\mathcal{Z}(X_{n+1})$ using a pre-trained DQR model.
\STATE Construct the calibrated quantile region $S^{\gamma_{\mathrm{cal}}}(X_{n+1})$ according to (\ref{eq.basic_region}).
\STATE Construct $C_{\mathcal{Y}}(X_{n+1}) = \mathcal{D}ec(S^{\gamma_{\mathrm{cal}}}(X_{n+1}),X_{n+1})$. 
\end{algorithmic}

\end{algorithm}
}

\subsection{Theoretical Guarantee}
\label{sec.theory}

The role of our theoretical results is to formally support the proposed reliability metric and latent-space CReL framework. Proposition~\ref{prop.coverage-Z} establishes coverage after latent-space calibration, Theorem~\ref{thm.coverage} shows that the decoded prediction set inherits the desired coverage guarantee, and Proposition~\ref{prop.gamma_convex} identifies the convex and compact structure that makes projected optimization tractable.

In this section, we provide the coverage guarantee for $C_{\mathcal{Y}}(X_{n+1})$. First, we show that after calibration, $C_{\mathcal{Z}}(X_{n+1}) = S^{\gamma_{\mathrm{cal}}}(X_{n+1})$ satisfies the coverage guarantee in the latent space.
\begin{proposition}
\label{prop.coverage-Z}
Suppose data in $\mathcal{D}_{\mathrm{lgm}}$, $\mathcal{D}_{\mathrm{dqr}}$, and $\mathcal{D}_{\mathrm{cal}} \cup \{X_{n+1},Y_{n+1}\}$ are independent to each other. Besides, we assume $\{X_i,Y_i\}_{i \in \mathcal{I}_{\mathrm{cal}}} \cup (X_{n+1},Y_{n+1})$ are exchangeable. Given a nominal coverage level $\alpha \in (0,1)$, the quantile region $S^{\gamma_{\mathrm{cal}}}(X_{n+1})$ given by Alg.~\ref{alg.conformal-reliability} satisfies:
\[
1 - \alpha \leq \mathbb{P}\left(Z_{n+1} \in S^{\gamma_{\mathrm{cal}}}(X_{n+1})\right) \leq 1-\alpha + \frac{1}{1 + |\mathcal{D}_{\mathrm{cal}}|}.
\]
\end{proposition}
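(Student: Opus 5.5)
The plan is the standard split-conformal argument, applied to the latent nonconformity score
\[
E^+(x,z) := \min_{a \in R_\mathcal{Z}(x)} d(a,z).
\]

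\textbf{Step 1: conditioning to fix the score.} Condition on $\mathcal{D}_{\mathrm{lgm}}$ and $\mathcal{D}_{\mathrm{dqr}}$. By the independence assumption, this fixes the encoder $\mathcal{E}$ and the DQR map $x\mapsto R_\mathcal{Z}(x)$ as deterministic objects. Consequently $E^+(\cdot,\cdot)$ is a fixed measurable function.

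\textbf{Step 2: transferring exchangeability.} For $i\in\mathcal{I}_{\mathrm{cal}}\cup\{n+1\}$, the latent pair is $Z_i=\mathcal{E}(\widehat{Y}_i;X_i)$ with $\widehat Y_i=f(X_i)$. So each $Z_i$ is a fixed function of $(X_i,Y_i)$, or of the model's sampling randomness, which is i.i.d. across points. The scores $E_i^+ := E^+(X_i,Z_i)$ are then the same fixed function applied coordinatewise to an exchangeable sequence. Hence $\{E_i^+\}_{i\in\mathcal{I}_{\mathrm{cal}}\cup\{n+1\}}$ is exchangeable conditional on the first two folds.

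\textbf{Step 3: rewriting the coverage event as a rank event.} By definition \eqref{eq.basic_region}, $Z_{n+1}\in S^{\gamma_{\mathrm{cal}}}(X_{n+1})$ if and only if $E_{n+1}^+\le \gamma_{\mathrm{cal}}$. Set $m=|\mathcal{D}_{\mathrm{cal}}|$ and $k=\lceil (m+1)(1-\alpha)\rceil$. Then $E_{n+1}^+\le\gamma_{\mathrm{cal}}$ holds exactly when $E_{n+1}^+$ is among the $k$ smallest of the $m+1$ scores, with ties broken appropriately. If $k>m$, set $\gamma_{\mathrm{cal}}=\infty$ and coverage is trivial.

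\textbf{Step 4: the two bounds.} By exchangeability, the rank of $E_{n+1}^+$ is uniform on $\{1,\dots,m+1\}$ when there are no ties. This gives
\[
\mathbb{P}(E_{n+1}^+\le\gamma_{\mathrm{cal}}) \ge \frac{k}{m+1} \ge 1-\alpha .
\]
Without ties, the same uniformity gives equality $k/(m+1)$. Since $k<(m+1)(1-\alpha)+1$, this yields the upper bound $1-\alpha+1/(m+1)$. Finally, integrate over the first two folds to remove the conditioning.

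\textbf{Main obstacle: ties.} The delicate step is the upper bound. The scores have a point mass at $0$ for every $Z_i$ lying inside $R_\mathcal{Z}(X_i)$, and $\gamma_{\mathrm{init}}>0$ suggests this mass is positive. With ties, the rank argument only yields coverage $\ge k/(m+1)$, and the upper bound can fail.

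The first thing I would try is to show that the atom at zero does not affect the $k$-th order statistic. For large $m$, $\gamma_{\mathrm{init}}\le 1-\alpha$, so the $k$-th smallest score lies in the continuous part. I would then add an assumption that $E^+$ has a continuous distribution on $(0,\infty)$, so that ties among positive scores occur with probability zero.

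If that route is not rigorous enough, the fallback is to state the upper bound under an explicit assumption that the scores are almost surely distinct. An alternative is to use randomized tie-breaking, i.e. adding independent uniform noise to the scores, which restores exact uniformity of the rank and hence the two-sided bound.
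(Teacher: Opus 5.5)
Your Steps 1--4 make the paper's argument explicit. The paper's proof is just your Steps 1--2: the encoder and DQR map are fixed by folds independent of $\mathcal{D}_{\mathrm{cal}}\cup\{(X_{n+1},Y_{n+1})\}$, so the $E_i^+$ are exchangeable, and both bounds are then tacitly read off the standard split-conformal lemma. Your lower bound is correct and survives ties. This is because $E_{n+1}^+\le\gamma_{\mathrm{cal}}$ iff $E_{n+1}^+\le E_{(k)}$, the $k$-th order statistic of all $m+1$ scores, and at least $k$ scores are $\le E_{(k)}$.

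Your suspicion about the upper bound is correct, and the paper does not handle it. But route (a) cannot close it, because the stated bound is not true in general once $p_0:=\mathbb{P}(E^+=0)>0$. Grant a continuous law on $(0,\infty)$ and let $N_0$ be the number of zero scores among the $m+1$. Exchangeability then gives exactly
\[
\mathbb{P}\bigl(E_{n+1}^+\le\gamma_{\mathrm{cal}}\bigr)=\frac{\mathbb{E}[\max(k,N_0)]}{m+1}=\frac{k}{m+1}+\frac{\mathbb{E}[(N_0-k)^+]}{m+1}.
\]
Your claim ``$\gamma_{\mathrm{init}}\le 1-\alpha$ for large $m$'' holds only with high probability. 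For $k\le m$ and i.i.d.\ data, $\mathbb{P}(N_0>k)\ge p_0^{m+1}>0$ for every finite $m$, so the last term is strictly positive. Meanwhile the rounding slack $(m+1)(1-\alpha)+1-k\in(0,1]$ can be made arbitrarily small through the choice of $\alpha$. Concretely, take $Z\mid X$ with a density, $\mathbb{P}(Z\in R_\mathcal{Z}(X))=1/2$, $m=3$ and $\alpha=0.725$, so $k=2$. The coverage is $19/32\approx 0.594>0.525=1-\alpha+\frac14$. Moreover, nothing in the proposition excludes $p_0>1-\alpha$; under-coverage of $R_\mathcal{Z}$ for $r>1$ is a heuristic in the text, not a hypothesis. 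In that case coverage tends to $p_0$ as $m\to\infty$.

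What route (a) actually yields, under continuity on $(0,\infty)$, is $\mathbb{P}(E_{n+1}^+\le\gamma_{\mathrm{cal}})<1-\alpha+\frac{1}{m+1}+\mathbb{P}(N_0>k)$. For i.i.d.\ scores with the extra assumption $p_0<1-\alpha$, Hoeffding bounds the extra term by $e^{-2(m+1)(1-\alpha-p_0)^2}$.

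Your fallbacks do not rescue the statement as written. Almost-sure distinctness of the scores is incompatible with $p_0>0$ for i.i.d.\ data, since two given scores are both zero with probability $p_0^2$. It would force $R_\mathcal{Z}$ to carry no mass, contradicting $\gamma_{\mathrm{init}}>0$. Randomized tie-breaking proves the two-sided bound for a randomized set, not for $S^{\gamma_{\mathrm{cal}}}(X_{n+1})$.

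So the correct conclusion of your plan is the exact lower bound plus an upper bound with an additive error term. Alternatively, you get the two-sided bound for a modified procedure. For example, replace $E^+$ by the signed distance that is negative inside $R_\mathcal{Z}(x)$. For $Z\mid X$ with a density this removes the atom, while the sublevel sets stay convex and coincide with $S^\gamma(x)$ for $\gamma\ge 0$. You should state the result that way rather than claim the displayed two-sided bound.
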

\begin{proof}
    The goal is to show that $\{E_i^+\}_{i \in \mathcal{I}_{\mathrm{cal}}} \cup E_{n+1}^+$ are exchangeable. First, since $f$ has been trained and is fixed, $\{(X_i,\widehat{Y}_i)\}_{i \in \mathcal{I}_{\mathrm{cal}} \cup \{n+1\}}$ are exchangeable. Since for each $i \in \mathcal{I}_{\mathrm{cal}}$, $Z_i$ is obtained from $\mathcal{E}(\widehat{Y}_i, X_i)$, and the encoder $\mathcal{E}(\cdot, \cdot)$ is trained on $\mathcal{D}_{\mathrm{lgm}}$ that are independent to $\mathcal{D}_{\mathrm{cal}}$, we have $\{(X_i,Z_i)\}_{i \in \mathcal{I}_{\mathrm{cal}} \cup \{n+1\}}$ are exchangeable. Since $E_i^+$ for each $i \in \mathcal{I}_{\mathrm{cal}}$ is determined by $\mathcal{D}_{\mathrm{dqr}}$ that are independent to $\mathcal{D}_{\mathrm{cal}} \cup \{X_{n+1},Y_{n+1}\}$, $\{E_i^+\}_{i \in \mathcal{I}_{\mathrm{cal}}} \cup E_{n+1}^+$ are exchangeable.
\end{proof}

Using this property, we can further demonstrate that, provided the latent generative model accurately recovers the conditional distribution $Y|X=x$, the resulting prediction set $C_{\mathcal{Y}}(X_{n+1})$ also satisfies the desired coverage.
\begin{theorem}
    \label{thm.coverage}
    Assume conditions in proposition~\ref{prop.coverage-Z} hold. Besides, we assume that $\forall x \in \mathcal{X}$, {$\mathcal{D}ec(\mathcal{E}(\widehat{Y},x),x) =_d \widehat{Y}|X=x$}. Given any nominal coverage level $\alpha \in (0,1)$, $C_{\mathcal{Y}}(X_{n+1}) := \mathcal{D}ec(S^{\gamma_{\mathrm{cal}}}(X_{n+1}),X_{n+1})$ given by Alg.~\ref{alg.conformal-reliability} satisfies:
\[
\mathbb{P}\left(\widehat{Y}_{n+1} \in C_{\mathcal{Y}}(X_{n+1}) \right)\geq1-\alpha.
\]
\end{theorem}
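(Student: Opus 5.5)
The plan is to transfer the latent-space guarantee of Proposition~\ref{prop.coverage-Z} to the output space through the decoder, using the distributional-recovery assumption. The basic idea is this: if $Z_{n+1}=\mathcal{E}(\widehat{Y}_{n+1},X_{n+1})$ lands in $S^{\gamma_{\mathrm{cal}}}(X_{n+1})$, then its decoding $\mathcal{D}ec(Z_{n+1},X_{n+1})$ lands in $C_{\mathcal{Y}}(X_{n+1})=\mathcal{D}ec(S^{\gamma_{\mathrm{cal}}}(X_{n+1}),X_{n+1})$ by definition of the image set. This gives the event inclusion
\[
\{Z_{n+1}\in S^{\gamma_{\mathrm{cal}}}(X_{n+1})\}\subseteq\{\mathcal{D}ec(\mathcal{E}(\widehat{Y}_{n+1},X_{n+1}),X_{n+1})\in C_{\mathcal{Y}}(X_{n+1})\},
\]
and hence, by Proposition~\ref{prop.coverage-Z}, the reconstructed output $\widetilde{Y}_{n+1}:=\mathcal{D}ec(\mathcal{E}(\widehat{Y}_{n+1},X_{n+1}),X_{n+1})$ lies in $C_{\mathcal{Y}}(X_{n+1})$ with probability at least $1-\alpha$.

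Next I would replace $\widetilde{Y}_{n+1}$ by $\widehat{Y}_{n+1}$. I would condition on everything that determines the set map, namely $\mathcal{D}_{\mathrm{lgm}}$, $\mathcal{D}_{\mathrm{dqr}}$ and $\mathcal{D}_{\mathrm{cal}}$, and therefore $\gamma_{\mathrm{cal}}$, $\mathcal{E}$, $\mathcal{D}ec$ and $R_\mathcal{Z}$. I would also condition on $X_{n+1}=x$. The set $C_{\mathcal{Y}}(x)$ is then a fixed set $A_x$. By the independence assumed in Proposition~\ref{prop.coverage-Z}, the conditional law of $\widehat{Y}_{n+1}$ given $X_{n+1}=x$ and these data is still the law of $\widehat{Y}\mid X=x$. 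The assumption $\mathcal{D}ec(\mathcal{E}(\widehat{Y},x),x)=_d \widehat{Y}\mid X=x$ then gives
\[
\mathbb{P}(\widehat{Y}_{n+1}\in A_x\mid X_{n+1}=x,\text{data})=\mathbb{P}(\widetilde{Y}_{n+1}\in A_x\mid X_{n+1}=x,\text{data}).
\]
Integrating over $x$ and the data yields $\mathbb{P}(\widehat{Y}_{n+1}\in C_{\mathcal{Y}}(X_{n+1}))=\mathbb{P}(\widetilde{Y}_{n+1}\in C_{\mathcal{Y}}(X_{n+1}))\ge 1-\alpha$.

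The main obstacle is the conditioning step. $C_{\mathcal{Y}}(X_{n+1})$ depends on $\gamma_{\mathrm{cal}}$, which is a function of the calibration data, so the marginal bound from Proposition~\ref{prop.coverage-Z} is a joint statement over calibration data and test point. It cannot be applied conditionally on the data. To handle this, I would not condition when invoking the event inclusion. Instead, I would use equality in distribution only for the pair $(\widehat{Y}_{n+1},\widetilde{Y}_{n+1})$ given $(X_{n+1},\text{data})$; this is legitimate because $(X_{n+1},\widehat{Y}_{n+1})$ is independent of the data and the assumed equality holds for every $x$. Then $\mathbb{P}(\widehat{Y}_{n+1}\in C_{\mathcal{Y}})$ equals $\mathbb{P}(\widetilde{Y}_{n+1}\in C_{\mathcal{Y}})$ unconditionally by the tower property, and the unconditional bound from the first paragraph finishes the argument.

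A secondary technical point is measurability of the image set $\mathcal{D}ec(S^{\gamma},x)$. I would handle it by assuming $\mathcal{D}ec(\cdot,x)$ is continuous, so that the image of the compact convex $S^{\gamma}$ is compact.
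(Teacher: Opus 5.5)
Your proposal is correct and follows the same route as the paper. You use the event inclusion $\{Z_{n+1}\in S^{\gamma_{\mathrm{cal}}}(X_{n+1})\}\subseteq\{\mathcal{D}ec(Z_{n+1},X_{n+1})\in C_{\mathcal{Y}}(X_{n+1})\}$ together with Proposition~\ref{prop.coverage-Z}, then swap the reconstruction for $\widehat{Y}_{n+1}$ via the distributional-recovery assumption. Your conditioning on the fitted objects and on $X_{n+1}=x$ spells out a step the paper leaves implicit; note that the independence of $(X_{n+1},\widehat{Y}_{n+1})$ from $\mathcal{D}_{\mathrm{cal}}$ comes from the i.i.d.\ sampling in the setup, not from Proposition~\ref{prop.coverage-Z}, which only makes calibration and test points exchangeable.
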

\begin{proof}
    First, by proposition~\ref{prop.coverage-Z}, we have $\mathbb{P}(Z_{n+1} \in S^{\gamma_{\mathrm{cal}}}(X_{n+1})) \geq 1-\alpha$. Since $Z_{n+1} \in S^{\gamma_{\mathrm{cal}}}(X_{n+1}) \implies  \mathcal{D}ec(Z_{n+1},X_{n+1}) \in \mathcal{D}ec(S^{\gamma_{\mathrm{cal}}}(X_{n+1}),X_{n+1})$, we have:
    \begin{equation}
    \label{eq.dec-expand}
    \begin{split}
        & \mathbb{P}\left(\mathcal{D}ec(Z_{n+1},X_{n+1}) \in \mathcal{D}ec(S^{\gamma_{\mathrm{cal}}}(X_{n+1}),X_{n+1})\right) \\
        & \quad \overset{(1)}{\geq} \mathbb{P}(Z_{n+1} \in S^{\gamma_{\mathrm{cal}}}(X_{n+1})) \geq 1-\alpha.
    \end{split}
\end{equation}
    Since {$\mathcal{D}ec(\mathcal{E}(\widehat{Y},x),x) =_d \widehat{Y}|X=x$}, we further have 
    \begin{align*}
    & \mathbb{P}(\widehat{Y}_{n+1} \in C_{\mathcal{Y}}(X_{n+1})) \\
    & \quad = \mathbb{P}\left(\mathcal{D}ec(\mathcal{E}(\widehat{Y}_{n+1},X_{n+1}),X_{n+1}) \in C_{\mathcal{Y}}(X_{n+1}) \right) \\
    & \quad \geq 1-\alpha.
\end{align*}
    {This completes the proof.} 
\end{proof}

\begin{remark}
   Compared to \cite{feldman2023calibrated}, which performs calibration directly in the original output space $\mathcal{Y}$, the prediction set $C_{\mathcal{Y}}(X_{n+1})$ generated by our method may be slightly more conservative in terms of coverage. This is due to the effect described in “(1)” of (\ref{eq.dec-expand}), where the decoder can expand the region. Nevertheless, this slight increase in conservativeness is a worthwhile trade-off, as it facilitates optimization when computing reliability scores.
   Moreover, as demonstrated empirically, the degree of overconservativeness is minor, \emph{i.e.}, the size of the resulting region is comparable to that reported in \citep{feldman2023calibrated}.
\end{remark}
The assumption that LGM can well recover the conditional distribution has been similarly made in \citep{feldman2023calibrated}. This property can hold for many types of latent generative models, including the variational autoencoder \cite{khemakhem2020variational}, and the stable diffusion model \citep{rombach2022high, li2023generalization}. We further examine this assumption in Appendix~\ref{append.lgm_sensitivity} through a reconstruction-error sensitivity analysis: when the VAE reconstruction loss is sufficiently small, coverage remains stable near the target level, whereas large reconstruction error leads to coverage failure. This suggests that a properly trained LGM mitigates the risk of evaluation bias induced by imperfect reconstruction.

\subsection{Optimization}
\label{sec.optimize}

After constructing $C_{\mathcal{Z}}(X_{n+1}):=S^{\gamma_{\mathrm{cal}}}(X_{n+1})$ and $C_{\mathcal{Y}}(X_{n+1})$, we are ready to compute the reliability (\ref{eq.reliability_intro}) given a metric $\rho$. Since $C_{\mathcal{Y}}(X_{n+1}) := \mathcal{D}ec(C_{\mathcal{Z}}(X_{n+1}),X_{n+1})$, it is equivalent to consider the following objective:
\begin{equation}
\label{eq.obj_new}
    \min_{z \in C_{\mathcal{Z}}(X_{n+1})} \rho\left(\mathcal{D}ec(z; X_{n+1}), \mathrm{GT}_{n+1} \right).
\end{equation}
Compared with the original objective, where both $\rho$ and the constraint set may be nonconvex, the feasible region $C_{\mathcal{Z}}(X_{n+1})$ is convex and compact, as shown below. Thus objective \eqref{eq.obj_new} falls into the category of nonconvex optimization over a convex set, as studied in \citep{lacoste2016convergence}.
\begin{proposition}
\label{prop.gamma_convex}
If $R_\mathcal{Z}(x)$ is convex and $d(\cdot,\cdot)$ is jointly convex, 
$S^\gamma(x)$ is a convex and compact set for any $\gamma$.
\end{proposition}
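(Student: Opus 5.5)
Convexity follows from a standard fact: partially minimizing a jointly convex function over a convex set yields a convex function. Define $g(z) := \min_{a \in R_\mathcal{Z}(x)} d(a,z)$, so that $S^\gamma(x) = \{z : g(z) \le \gamma\}$ is a sublevel set of $g$. It therefore suffices to show that $g$ is convex and that its sublevel sets are closed and bounded.

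\textbf{Step 1 (convexity).} Take $z_1, z_2$ and $\lambda \in [0,1]$. For any $\epsilon > 0$, choose $a_1, a_2 \in R_\mathcal{Z}(x)$ with $d(a_j, z_j) \le g(z_j) + \epsilon$. Convexity of $R_\mathcal{Z}(x)$ gives $\lambda a_1 + (1-\lambda) a_2 \in R_\mathcal{Z}(x)$. Joint convexity of $d$ then gives
\[
g(\lambda z_1 + (1-\lambda) z_2) \le d\bigl(\lambda a_1 + (1-\lambda) a_2, \lambda z_1 + (1-\lambda) z_2\bigr) \le \lambda g(z_1) + (1-\lambda) g(z_2) + \epsilon .
\]
Letting $\epsilon \to 0$ shows $g$ is convex, so $S^\gamma(x)$ is convex. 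The case $\gamma < 0$, or more generally an empty set, is trivially convex and compact.

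\textbf{Step 2 (closedness).} I will argue that $g$ is lower semicontinuous. The statement does not spell out regularity, so I will make explicit the implicit assumptions: $d$ is a continuous (e.g.\ norm-induced) distance, and $R_\mathcal{Z}(x)$ is closed. Closedness of $R_\mathcal{Z}(x)$ holds automatically, since by \eqref{eq.dqr_region} it is an intersection of closed half-spaces. Under these assumptions, a finite convex function on $\mathbb{R}^r$ is continuous, so $\{g \le \gamma\}$ is closed. Alternatively, if $d(a,z) = \|a - z\|$, then $g$ is the distance to a closed set and is $1$-Lipschitz.

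\textbf{Step 3 (boundedness).} This is the main obstacle. The region $R_\mathcal{Z}(x)$ is an intersection of half-spaces over all directions $\mathbf{u} \in \mathbb{S}^{r-1}$. Whenever it is nonempty it is bounded: for each $\mathbf{u}$, both $\mathbf{u}^\top z \ge f_{\boldsymbol\beta}(x,\mathbf{u})$ and $-\mathbf{u}^\top z \ge f_{\boldsymbol\beta}(x,-\mathbf{u})$ hold, so every coordinate projection is sandwiched between finite values. I would note that this uses finiteness of $f_{\boldsymbol\beta}$; in the practical finite-direction version, the assumption is that the directions positively span $\mathbb{R}^r$.

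With $R_\mathcal{Z}(x)$ bounded, say contained in a ball $B(0,M)$, the remaining ingredient is coercivity of $d$: $d(a,z) \to \infty$ as $\|z - a\| \to \infty$, uniformly for $a$ in bounded sets. This holds for any metric induced by a norm. It implies $S^\gamma(x) \subseteq B(0, M + c(\gamma))$ for some finite $c(\gamma)$. Combining closedness and boundedness gives compactness via Heine--Borel.

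I would flag explicitly in the writeup that the statement as given needs these mild implicit assumptions: $d$ continuous and coercive (e.g.\ a norm distance), and $R_\mathcal{Z}(x)$ bounded, which the all-directions construction guarantees.
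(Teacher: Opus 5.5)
Your proof is correct, and Step 1 is the paper's convexity argument: witnesses $a_1,a_2\in R_\mathcal{Z}(x)$, their convex combination, and joint convexity of $d$. Your version is slightly cleaner because the $\epsilon$-approximate minimizers avoid assuming the minimum is attained.

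For compactness, the paper only asserts that it follows from compactness of $R_\mathcal{Z}(x)$ and boundedness of $\gamma$. Your Steps 2--3 supply what is missing:
\begin{itemize}
\item boundedness of $R_\mathcal{Z}(x)$, via the antipodal pairs $\pm\mathbf{u}$;
\item closedness of $S^\gamma(x)$, via continuity of the finite convex function $g$;
\item the point that joint convexity of $d$ alone cannot give boundedness, since the constant $d\equiv 0$ is jointly convex yet yields $S^\gamma(x)=\mathbb{R}^r$.
\end{itemize}
So an extra assumption such as a norm-induced (coercive) $d$ is indeed needed, as you flag.
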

\begin{remark}
    The joint convexity can hold for any norm-induced distance (\emph{i.e.}, $d(a,b):=\Vert a - b \Vert$), Bregman divergence, or $f$-divergence. In this paper, we choose $d(a, b)$ to be the Euclidean distance.
\end{remark}
Moreover, by \eqref{eq.score_low}, it is easy to see that the projection onto $S^\gamma(x)$ can be efficiently solved via a linear programming algorithm. Specifically, suppose $d(x,y) := \Vert x - y \Vert_2$. To compute $\Pi_{S^{\gamma_{\mathrm{cal}}}(x)}(y) := \arg\min_{z \in S^{\gamma_{\mathrm{cal}}}(x)} \Vert y - z \Vert_2$ given a new point $y$ to be projected, we can see that:
\begin{align*}
    \Pi_{S^{\gamma_{\mathrm{cal}}}(x)}(y) = \begin{cases}
        y & \text{ if $y \in S^{\gamma_{\mathrm{cal}}}(x)$} \\
        y^* + \gamma_{\mathrm{cal}} \frac{y^* - y}{\Vert y^* - y \Vert_2} & \text{ otherwise,}
    \end{cases}
\end{align*}
where $y^*:=\arg\min_{y_1 \in R_{\mathcal{Z}}(x)} \Vert y_1 - y \Vert_2$. It is then sufficient to compute $\Pi_{R_{\mathcal{Z}}(x)}(y)$, which can be formulated as the linear programming problem as follows:
\begin{align*}
    & \min_{y_1} \Vert y_1 - y \Vert_2^2, \\
    & \text{s.t. } \mathbf{u}_k^\top y_1 \geq f_{\widehat{\beta}}(x,\mathbf{u}_k), \quad k=1,...,K,
\end{align*}
where $\widehat{\beta}$ is obtained after DQR training, and $R_{\mathcal{Z}}(x)$ is constructed using $K$ directions $\mathbf{u}_1,...,\mathbf{u}_K$. As the projection can be efficiently computed, we can implement projected gradient descent \cite{ghadimi2016mini, ghadimi2016accelerated}, whose global convergence property has been established \citep{ghadimi2016accelerated}. To find the global optimal, we use random search to pick several starting points and then apply the projected gradient descent to the initial that has the smallest $\rho$. Empirically, this procedure is stable to random initialization: on image-to-text evaluation with BLIP-base and CLIP-SIM, using $\alpha=0.1$ and $\text{num}_{z_0}=50$, the standard deviation across $10$ repeated runs is only $0.00027$; see Appendix~\ref{append.init_seed_sensitivity}.

\vspace{-0.5em}
\section{Experiments}

\subsection{Experiments on Synthetic Datasets} \label{sec.exp_simul}
\vspace{-0.1cm}

\textbf{Setups.} 
We consider the \textit{nonlinear} synthetic setting (see Appendix~\ref{append.gen_data} for generation details). We generate \(50,000\) samples and set \(p = 38\), \(d = 2\), and \(\epsilon = 0.3\). The dataset is split as follows: 60\% for training the latent generative model, 24\% for training the DQR, 8\% for calibration, and 8\% for testing. 
{By default, we adopt the Mean Squared Error (MSE) as the similarity metric $\rho$, consistent with the standard VAE reconstruction loss.} 
We report the average coverage ratios of $C_\mathcal{Z}(X)$ and $C_\mathcal{Y}(X)$, as well as the area (defined as the number of grid points falling into the region) of the calibration region $C_\mathcal{Y}(X)$. For comparison, we also report the coverage ratios and areas of the calibration regions for the method of \cite{feldman2023calibrated} and the standard DQR method.
{Additionally, to demonstrate that our coverage guarantees are robust to the choice of metric, we provide results using the Mean Absolute Error (MAE) in Appendix~\ref{append.mae_results}.}

\textbf{Implementation details.} 
We set the latent space dimension to \( r = 2 \). For the latent generative model, we choose VAE and set the KL regularization hyperparameter $\beta = 0.001$ (see Appendix~\ref{append.ablation} for the ablation study on the choice of $r$ and $\beta$). For DQR in our method and in Feldman's method, the input size is \( p + d \), and each gradient step uses $1,024$ directions with \( \alpha = 0.1 \). All data are $L_2$-normalized before training. Since setting the quantile level to $\alpha$, DQR does not achieve the target coverage, we decrease the quantile level until the target coverage is met, resulting in the quantile levels of $0.01$ and $0.001$ to achieve coverage rates of $1-\alpha = 0.9$ and $0.98$, respectively. For simplicity, we denote them as DQR-$0.01$ and DQR-$0.001$. More details can be found in Appendix~\ref{append.implement}. 

\textbf{Calibration result.} As shown in Table~\ref{tab:simulation}, all methods achieve the target coverage, and the prediction set region of our method is much smaller than that of DQR. This shows that, although both DQR and CReL approximately attain the desired coverage, the proposed joint calibration yields a more informative prediction set. We also note that the region is slightly larger than that of Feldman, which may be due to the expansion caused by the decoder \eqref{eq.dec-expand}. We also compare with PCP~\citep{wang2022probabilistic} on the nonlinear simulation with $\alpha=0.1$: PCP obtains coverage $0.9001$ with area $854.24$, whereas CReL obtains coverage $0.8915$ with area $232.70$; see Appendix~\ref{append.pcp_comparison} for settings.

\begin{table}[tbp]
\centering
\caption{Coverage ratio and area on the \textit{nonlinear} synthetic dataset 
with different nominal levels $\alpha$.}
\vspace{-.5em}
\label{tab:simulation}
\resizebox{0.99\linewidth}{!}{
\begin{tabular}{lccccccccccccc}
\toprule 
\multirow{2}{*}{$\alpha$} & 
\multicolumn{5}{c}{Coverage} &
\multicolumn{3}{c}{Area in $\mathcal{Y}$} \\
\cmidrule(lr){2-6} \cmidrule(lr){7-9} & Ours-$\mathcal{Z}$ & Ours-$\mathcal{Y}$ & Feldman-$\mathcal{Y}$ & DQR-$\mathcal{Z}$ & DQR-$\mathcal{Y}$ & Ours  & Feldman & DQR \\
\midrule
$0.02$ & $0.9770$  & $0.9760$ & $0.9718$ & $0.9818$ & $0.9872$ & $398.5$ & $377.8$ & $749.1$ \\
0.10 & $0.8953$  & $0.8915$ & $0.8940$ & $0.8823$ & $0.9145$ & $232.7$ & $234.5$ & $287.4$ \\
\bottomrule
\end{tabular}
}
\vspace{-1.5em}
\end{table}
\begin{figure*}[t]
    \centering
    \includegraphics[width=0.95\textwidth]{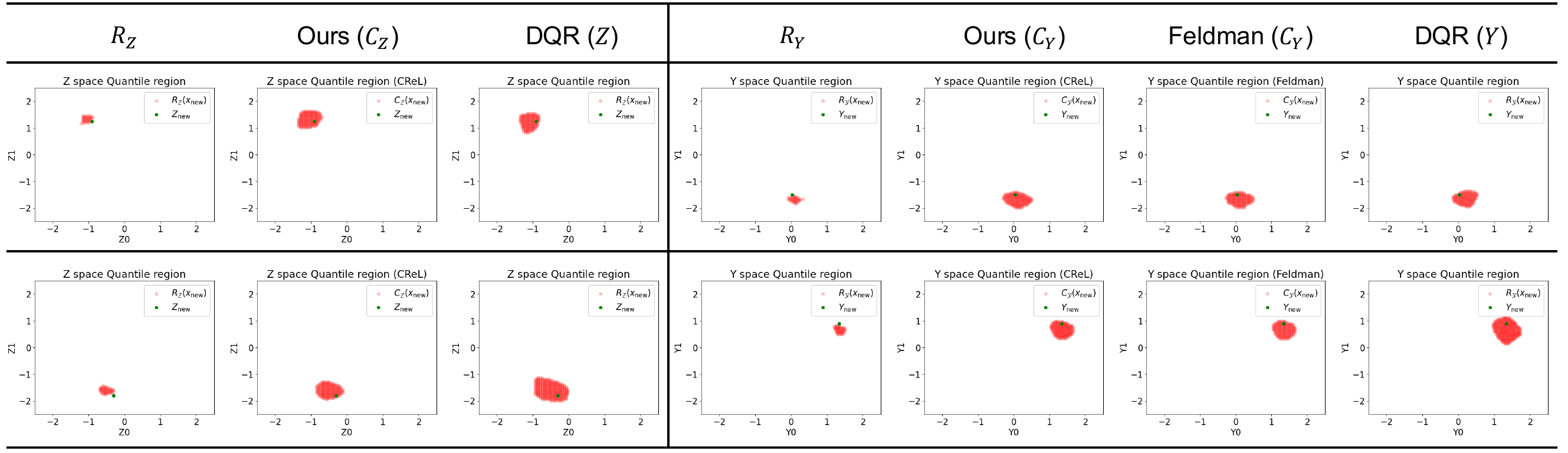}
    \vspace{-0.3em}
    \caption{Visualization of regions produced by various methods ($\alpha=0.1$). Each row represents a case, \emph{i.e.}, a fixed $x$. \textbf{Left:} region in $\mathcal{Z}$; \textbf{Right:} region in $\mathcal{Y}$. Calibrated regions are marked in red, the test sample ($Z_{\mathrm{new}}$ or $Y_{\mathrm{new}}$) is marked in green.} 
    \label{fig:region_visualization} 
    \vspace{-0.8em}
\end{figure*}

\textbf{Visualization.} We visualize the region for two different values of $X$, where $R_{\mathcal{Z}}$ represents the region before calibration \eqref{eq.basic_region}, and $R_{\mathcal{Y}}$ denotes the decoded region of $R_{\mathcal{Z}}$, \emph{i.e.}, $R_{\mathcal{Y}} = \mathcal{D}ec(R_{\mathcal{Z}}, x)$. As shown in Fig.~\ref{fig:region_visualization}, the pre-calibrated region (in red) initially excludes the outcome (in green), but after calibration, the outcome is successfully included. Compared to DQR, the regions produced by our method and Feldman's approach are smaller. Additionally, the regions have different shapes across the two cases, demonstrating the adaptiveness of our calibration procedure.

\textbf{CReL scales efficiently for high-dimensional calibration.}
We evaluate the total calibration runtime across latent dimensions and find that our method scales efficiently with $r$, while the grid-based approach~\cite{feldman2023calibrated} incurs exponential growth and becomes infeasible in high dimensions (see Appendix~\ref{append.runtime}). 
This confirms CReL’s practicality for modern large-scale systems. 

\vspace{-0.5em}
\subsection{Experiments on Image-to-Text Task}
\textbf{Dataset and preprocessing.} We use the MS-COCO 2014 validation set~\cite{lin2014microsoft} 
($40,504$ image-caption pairs), and split it into $75\%$ for VAE training, $15\%$ for DQR, $5\%$ for calibration, and $5\%$ for testing. We evaluate four models: BLIP (base and large)~\cite{li2022blip} and GIT (base and large)~\cite{wang2022git}, all at image size $224 \times 224$. We measure CLIP cosine similarity (CLIP-SIM) between the condition image and generated caption, \emph{i.e.}, $\rho(\widehat{Y},\mathrm{GT})=\mathrm{Cos\text{-}Sim}(\widehat{Y},X)$, where $X$ denotes the image feature and $\widehat{Y}$ denotes the generated caption. We also measure the BERT cosine similarity (BERT-SIM) between the ground truth caption and generated caption, \emph{i.e.}, $\rho(\widehat{Y},\mathrm{GT})=\mathrm{Cos\text{-}Sim}(\widehat{Y},Y)$, which first extracts features from $\widehat{Y}$ and $Y$ and then computes the cosine similarity between these feature representations. 

\textbf{Implementation details.} Image features are extracted using CLIP ViT-L/14 \cite{radford2021learning}, and caption features are obtained using BERT-base \cite{kenton2019bert}, where we use the [CLS] token that serves as a summary feature of the entire caption. Both feature types have dimension \(p = d = 768\). The maximum text length is set to $50$. For VAE, we set $r = 50$ and use $\beta = 0.001$ for KL regularization (see Appendix~\ref{append.ablation} for ablation study). 
For DQR, the input size is \( p + d \), and each gradient step uses $1,024$ directions with \( \alpha = 0.1 \). All data are $L_2$-normalized before training. During optimization, we initialize the procedure with $50$ starting points for BERT and CLIP. More details can be found in Appendix~\ref{append.ab_num_z0}. 

\begin{table}[tbp]
\small
\centering
\setlength\tabcolsep{2pt}
\caption{
\label{tab.img2text}
Quantitative results of the image-to-text generation task at $\alpha=0.1$, with differences between $\text{CReL-}\rho$ and $\rho$ ($\Delta$) highlighted in blue.
Superscripts indicate the performance rank.
}
\scalebox{0.9}{
\begin{tabular}{lcccc}
\toprule
\multirow{2}{*}{Model} & \multicolumn{2}{c}{CLIP-SIM} & \multicolumn{2}{c}{BERT-SIM} \\ 
\cmidrule(lr){2-3} \cmidrule(lr){4-5}
& CLIP & \makecell{CReL-CLIP} & BERT & \makecell{CReL-BERT} \\ 
\midrule
BLIP-base & $0.2330$\textcolor{rank4}{\textsuperscript{4}} & $0.0070$\textcolor{rank1}{\textsuperscript{1}} \color{blue}{\scriptsize{($-0.2260$)}} & $0.8349$\textcolor{rank3}{\textsuperscript{3}} & $0.6335$\textcolor{rank3}{\textsuperscript{3}}  \color{blue}{\scriptsize{($-0.2014$)}} \\
BLIP-large & $0.2453$\textcolor{rank3}{\textsuperscript{3}} & $-0.0074$\textcolor{rank4}{\textsuperscript{4}} \color{blue}{\scriptsize{($-0.2527$)}} & $0.8106$\textcolor{rank4}{\textsuperscript{4}} & $0.5631$\textcolor{rank4}{\textsuperscript{4}} \color{blue}{\scriptsize{($-0.2475$)}}  \\
GIT-base & $0.2511$\textcolor{rank2}{\textsuperscript{2}} & $-0.0021$\textcolor{rank2}{\textsuperscript{2}} \color{blue}{\scriptsize{($-0.2532$)}} & $0.8620$\textcolor{rank2}{\textsuperscript{2}} & $0.6474$\textcolor{rank1}{\textsuperscript{1}} \color{blue}{\scriptsize{($-0.2146$)}}  \\
GIT-large & $0.2550$\textcolor{rank1}{\textsuperscript{1}} & $-0.0043$\textcolor{rank3}{\textsuperscript{3}} \color{blue}{\scriptsize{($-0.2593$)}} & $0.8649$\textcolor{rank1}{\textsuperscript{1}} & $0.6459$\textcolor{rank2}{\textsuperscript{2}}  \color{blue}{\scriptsize{($-0.2190$)}} \\
\bottomrule
\end{tabular}
}
\vspace{-2em}
\end{table}


\begin{figure}[ht]
    \centering
    \setlength{\tabcolsep}{0pt} 
    \begin{tabular}{c} 
        \includegraphics[width=0.8\linewidth]{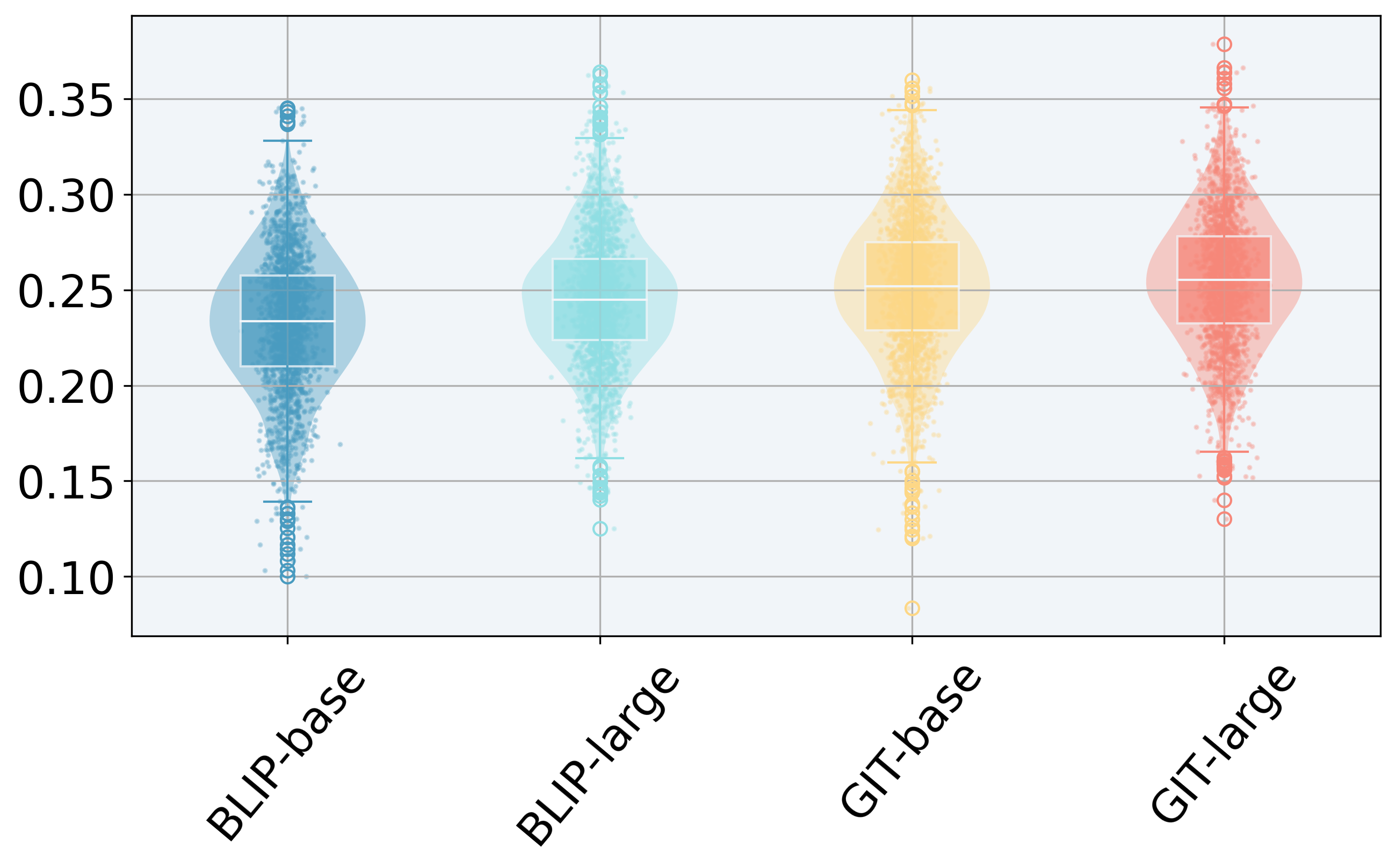} \label{fig.i2t-rho-clip} \\
        (a) CLIP  \\ 
        
        \includegraphics[width=0.8\linewidth]{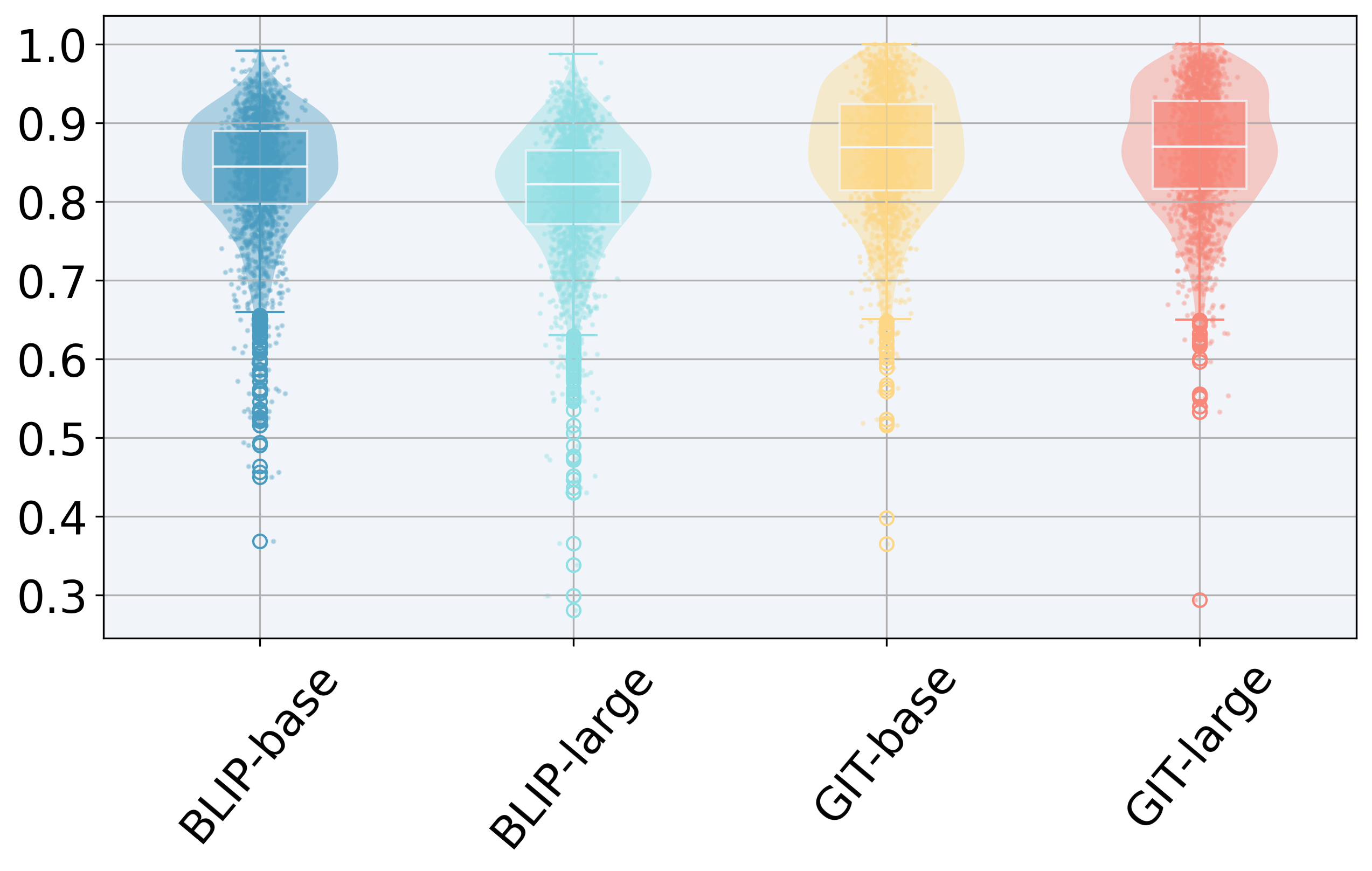} \\
        (b) BERT \label{fig.i2t-rho-bert}
    \end{tabular}
    
    \caption{Distribution of $\rho$ values for different models on the image-to-text generation task.}
    \label{fig.i2t-rho}
    \vspace{-2em}
\end{figure}

\begin{figure*}[ht]
    \centering
    \includegraphics[width=1.0\textwidth]{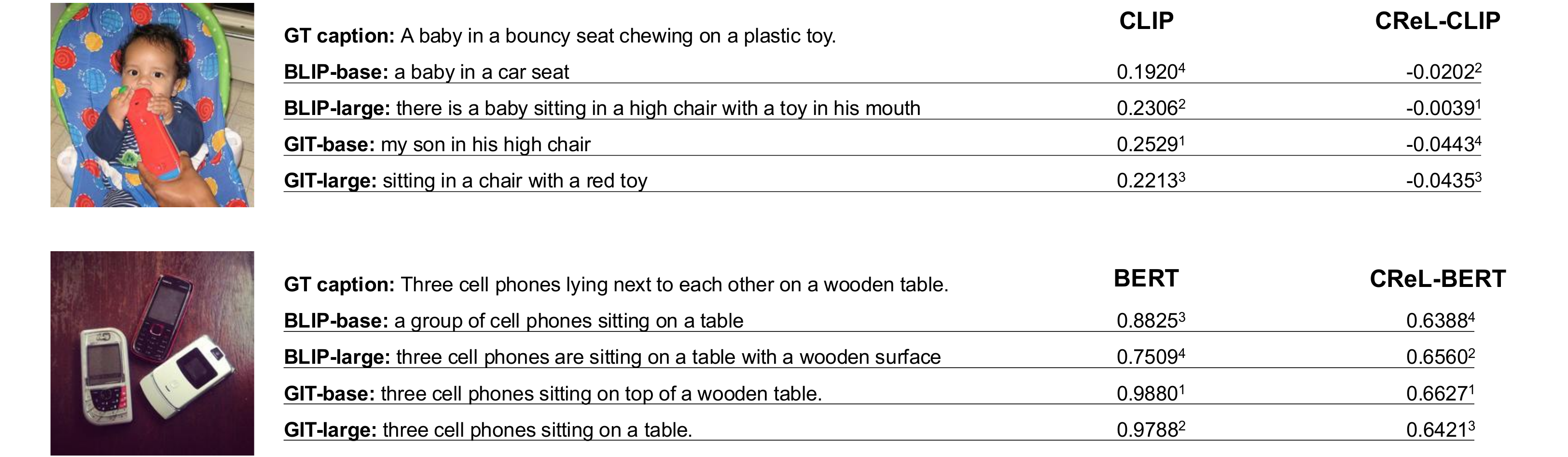}
    \caption{
    Qualitative results of image-to-text models ($\alpha=0.1$). Superscripts denote rank.
    \label{fig.i2t}
    }
    \vspace{-0.15in}
\end{figure*}
\textbf{Quantitative comparison.}
We compare two large-scale caption generation models (BLIP and GIT) in both base and large versions, using two similarity metrics: CLIP-SIM and BERT-SIM. 
Results at \(\alpha=0.1\) is shown in Tab.~\ref{tab.img2text}. For CLIP-SIM, the rankings among BLIP-base, BLIP-large, and GIT-large vary after calibration. Notably, BLIP-base ranks last in the original score but first in our reliability score. This can be explained by Fig.~\ref{fig.i2t-rho}(a), where the distribution of BLIP-base's scores is more concentrated around the central region compared to GIT-large, resulting in a higher worst-case score after calibration. For BERT-SIM, we observe that the gap between BLIP-base and BLIP-large is enlarged after calibration. Similarly, this can be explained by the more concentrated score distribution of BLIP-base relative to BLIP-large. In addition, our results also indicate that BLIP-base is the most reliable one in CLIP-SIM, while the GIT-base/large achieves the highest reliability in BERT-similarity. This may be because CLIP-SIM captures high-level semantic similarity between generated image and text, making it more suited to lightweight models like BLIP-base that avoid overfitting to irrelevant features. In contrast, BERT-SIM focuses on deeper and subtler contextual similarity. As a result, the GIT model, with its larger capacity and ability to process more intricate relationships, performs better in this task.

\textbf{Qualitative results: CReL effectively identifies misalignments.}
Figure~\ref{fig.i2t} presents examples illustrating that our calibrated metric better reflects generation quality compared to the original uncalibrated metric. Specifically, we take examples from CReL-CLIP (image–caption) and CReL-BERT (caption–caption). 
In the example for CReL‑CLIP (\emph{i.e.}, image-caption), the ground truth image shows ``a baby in a seat playing a toy", but the GIT-base overlooks the information of ``playing a toy". Despite this omission, CLIP assigns higher similarity scores to the GIT-base than the BLIP-large, which correctly identifies this semantics and is accurately ranked first by our reliability metric. In the example for CReL‑BERT (\emph{i.e.}, caption-caption), both BLIP-base (``a group of cell phones on a table") and GIT-large (``three phones sitting on a table") fail to specify the number of phones or mention that the table surface is wooden; however, they are ranked higher than BLIP-large, whose caption accurately captures both pieces of information. These results demonstrate that CReL effectively detects visual and semantic discrepancies that standard metrics miss, quantifying model reliability without sacrificing predictive performance. More examples can be found in Appendix~\ref{append.results1}.

\subsection{{Experiments on Text-to-Image Task}}
\label{sec.T2I}
{
\textbf{Dataset and preprocessing.} We use the MS-COCO 2014 validation set~\cite{lin2014microsoft} 
($40,504$ image-caption pairs), and split it into $75\%$ for VAE training, $15\%$ for DQR, $5\%$ for calibration, and $5\%$ for testing. We evaluate four models: SD3-M~\cite{esser2024scaling}, SD3.5-L~\cite{esser2024scaling}, FLUX.1-dev~\cite{flux}, and Kandinsky-2.2~\cite{kandinsky2-2}, all at image size $512 \times 512$ with 50 inference steps and guidance scale 7.0. We use two metrics: CLIP cosine similarity (CLIP-SIM), which measures caption-image semantic alignment using CLIP features, and DINO cosine similarity (DINO-SIM), which measures structural similarity between generated and ground-truth images using DINOv2-base features~\cite{oquab2023dinov2}.}
{
Please refer to Appendix~\ref{append.T2I} for additional implementation details and results.}

\begin{table}[tbp]
\small
\centering
\setlength\tabcolsep{1.5pt}

\caption{
\label{tab.text2img}
Quantitative results of the text-to-image generation task at $\alpha=0.1$ under CLIP-SIM and DINO-SIM. Differences between CReL scores and average scores ($\Delta$) are shown in parentheses. Superscripts indicate the performance rank.}

\scalebox{0.9}{
\begin{tabular}{lcccc}
\toprule
\multirow{2}{*}{Model} & \multicolumn{2}{c}{CLIP-SIM} & \multicolumn{2}{c}{DINO-SIM} \\ 
\cmidrule(lr){2-3} \cmidrule(lr){4-5}
& CLIP & \makecell{CReL-CLIP} & DINO & \makecell{CReL-DINO}\\ 
\midrule
SD3-M & $0.2590$\textcolor{rank3}{\textsuperscript{3}} & $0.0134$\textcolor{rank1}{\textsuperscript{1}} \color{blue}{\scriptsize{($-0.2456$)}} & $0.4615$\textcolor{rank1}{\textsuperscript{1}} & $-0.1480$\textcolor{rank4}{\textsuperscript{4}} \color{blue}{\scriptsize{($-0.6095$)}} \\
SD3.5-L & $0.2596$\textcolor{rank2}{\textsuperscript{2}} & $0.0116$\textcolor{rank2}{\textsuperscript{2}} \color{blue}{\scriptsize{($-0.2480$)}} & $0.4531$\textcolor{rank2}{\textsuperscript{2}} & $-0.1365$\textcolor{rank1}{\textsuperscript{1}} \color{blue}{\scriptsize{($-0.5896$)}} \\
FLUX.1-dev & $0.2509$\textcolor{rank4}{\textsuperscript{4}} & $0.0056$\textcolor{rank4}{\textsuperscript{4}} \color{blue}{\scriptsize{($-0.2453$)}} & $0.4395$\textcolor{rank4}{\textsuperscript{4}} & $-0.1411$\textcolor{rank3}{\textsuperscript{3}} \color{blue}{\scriptsize{($-0.5806$)}}  \\
Kandinsky-2.2 & $0.2603$\textcolor{rank1}{\textsuperscript{1}} & $0.0062$\textcolor{rank3}{\textsuperscript{3}} \color{blue}{\scriptsize{($-0.2541$)}} & $0.4407$\textcolor{rank3}{\textsuperscript{3}} & $-0.1404$\textcolor{rank2}{\textsuperscript{2}} \color{blue}{\scriptsize{($-0.5811$)}} \\
\bottomrule
\end{tabular}
}
\vspace{-0.15in}
\end{table}


{
\textbf{Quantitative comparison.}
We compare four text-to-image models under CLIP-SIM and DINO-SIM at $\alpha=0.1$, with results in Tab.~\ref{tab.text2img} and score distributions in Fig.~\ref{fig.t2i-rho}. 
For CLIP-SIM, Fig.~\ref{fig.t2i-rho-clip} shows that the four distributions largely overlap: Kandinsky-2.2 has the highest average CLIP score, but its distribution is not clearly separated from SD3-M or SD3.5-L. 
Thus, its average advantage is concentrated in the central/upper part of the distribution.
CReL instead focuses on the lower-performance region inside the calibrated prediction set, where SD3-M outperforms the other methods. 
This explains why SD3-M ranks first under CReL-CLIP despite ranking only third under average CLIP, while FLUX.1-dev remains last due to its lower central region and more visible low-score tail. 
For DINO-SIM, the distributions in Fig.~\ref{fig.t2i-rho-dino} are much wider. 
SD3-M has a strong upper-score region, which raises its average DINO score, but it also has substantial mass in the low-score region; CReL therefore ranks it last. 
By contrast, SD3.5-L does not have the highest average DINO score, but it provides a better lower-performance profile and achieves the best CReL-DINO score. 
These results show that CReL captures whether high average performance is preserved near the calibrated reliability boundary.
}

\section{Conclusion and Future Works} \label{sec.conclusion}
We introduce a worst-case reliability metric based on conformal calibration for evaluating conditional generative models, offering a more interpretable notion of trustworthiness than traditional single-output metrics. We further propose Conformal ReLiability (CReL), a flexible computational framework that supports most common and bespoke similarity measures.
Future work will extend CReL to more complex settings such as video generation, 3D reconstruction, and embodied robotics, where one-to-many, many-to-one, or many-to-many mappings call for new joint latent representations and calibration strategies to maintain robust guarantees.


\section*{Impact Statement}

This paper presents work whose goal is to advance the field of Machine
Learning. There are many potential societal consequences of our work, none
which we feel must be specifically highlighted here.

\section*{Acknowledgements}
This work was supported by the State Key Program of National Natural Science Foundation of China under Grant No. 12331009, and the Young Scientists Fund of the National Natural Science Foundation of China under Grant No. KRH2305058.

\bibliography{reference}
\bibliographystyle{icml2026}

\newpage
\appendix
\onecolumn
\section{Algorithm Details} 

In this section, we introduce our implementation of the VAE and the Stable Diffusion model. 

\subsection{Variational Auto Encoder} \label{append.cvae}
The variational lower bound of the model is written as follows~\cite{sohn2015learning}: 
\begin{equation}
    \log p_{\theta}(\mathbf{Y}) \geq \underbrace{\mathbb{E}_{q_{\phi}(\mathbf{Z}|\mathbf{Y})}\left[\log p_{\theta}(\mathbf{Y} | \mathbf{Z})\right] - D_{\text{KL}}\left(q_{\phi}(\mathbf{Z}|\mathbf{Y}) \| p(\mathbf{Z})\right)}_{\mathcal{L}_{\text{ELBO}}},
\end{equation}
where $\mathcal{L}_{\text{ELBO}}$ denotes the ELBO objective to be maximized. Here:
\begin{itemize}
    \item $\mathbf{Y}$: response variables (target)
    \item $\mathbf{Z}$: latent variables
    \item $q_{\phi}(\mathbf{Z}|\mathbf{Y})$: inference model (encoder)
    \item $p_{\theta}(\mathbf{Y}|\mathbf{Z})$: generative model (decoder)
\end{itemize}
The latent prior is fixed as \( p(\mathbf{Z}) = \mathcal{N}(\mathbf{0}, \mathbf{I}) \). The decoder outputs are denoted as $\widehat{\mathbf{Y}}$.

To incorporate task-specific metric $\rho(\mathbf{Y}, \widehat{\mathbf{Y}})$ (where higher values indicate better performance), we reformulate the likelihood as an energy-based model:
\begin{equation}
    p_{\theta}(\mathbf{Y}|\mathbf{Z}) \propto \exp\left(\lambda \cdot \rho(\mathbf{Y}, \widehat{\mathbf{Y}})\right),
\end{equation}
where $\lambda > 0$ is a temperature parameter. The intractable normalization constant:
\begin{equation}
    C(\mathbf{Z}) = \int \exp\left(\lambda \cdot \rho(\mathbf{Y}, \widehat{\mathbf{Y}})\right) d\mathbf{Y}
\end{equation}
is omitted during optimization following energy-based modeling conventions~\cite{lecun2006tutorial}.

Substituting into the ELBO definition gives:
\begin{equation}
    \mathcal{L}_{\text{ELBO}} = \mathbb{E}_{q_{\phi}(\mathbf{Z}|\mathbf{X},\mathbf{Y})}\left[\lambda \cdot \rho(\mathbf{Y}, \widehat{\mathbf{Y}}) - \log C(\mathbf{Z})\right] - D_{\text{KL}}\left(q_{\phi}(\mathbf{Z}|\mathbf{Y}) \| p(\mathbf{Z})\right).
\end{equation}

Approximating $\log C(\mathbf{X}, \mathbf{Z})$ as constant for gradient-based optimization yields:
\begin{equation}
    \mathcal{L}_{\text{ELBO}} \approx \mathbb{E}_{q_{\phi}(\mathbf{Z}|\mathbf{X},\mathbf{Y})}\left[\lambda \cdot \rho(\mathbf{Y}, \widehat{\mathbf{Y}})\right] - D_{\text{KL}}\left(q_{\phi}(\mathbf{Z}|\mathbf{Y}) \| p(\mathbf{Z}|)\right).
\end{equation} 

The final loss function $\mathcal{L} = -\mathcal{L}_{\mathrm{ELBO}}$ is composed of two parts:
\begin{equation}
    \mathcal{L} = \lambda \cdot \mathcal{L}_{\rho} + \beta \cdot \mathcal{L}_{\text{KL}},
\end{equation}
where $\mathcal{L}_{\rho} = -\mathbb{E}_{q_{\phi}(\mathbf{Z}|\mathbf{X},\mathbf{Y})}\left[\rho(\mathbf{Y}, \widehat{\mathbf{Y}})\right]$ is the metric-driven reconstruction term with $\lambda$ setting to 1, and $\mathcal{L}_{\text{KL}} = D_{\text{KL}}\left(q_{\phi}(\mathbf{Z}|\mathbf{Y}) \| p(\mathbf{Z})\right)$ is the KL regularization term with $\beta$ controlling its strength. 


\begin{figure}[htbp]
    \centering
    \renewcommand{\arraystretch}{1.2}
    \setlength{\tabcolsep}{3pt} 
    
    \begin{tabular}{cc}
        
        \includegraphics[width=0.22\linewidth]{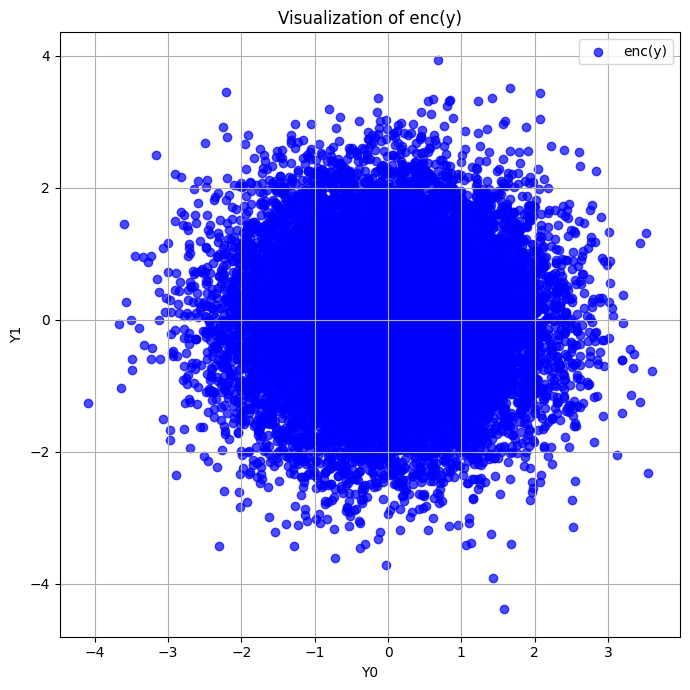} \hfill
        \includegraphics[width=0.22\linewidth]{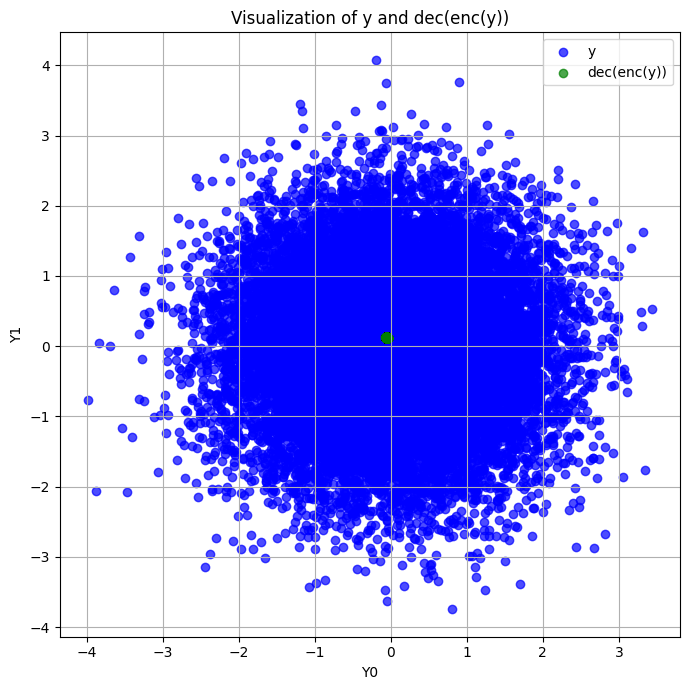}
        & 
        \includegraphics[width=0.22\linewidth]{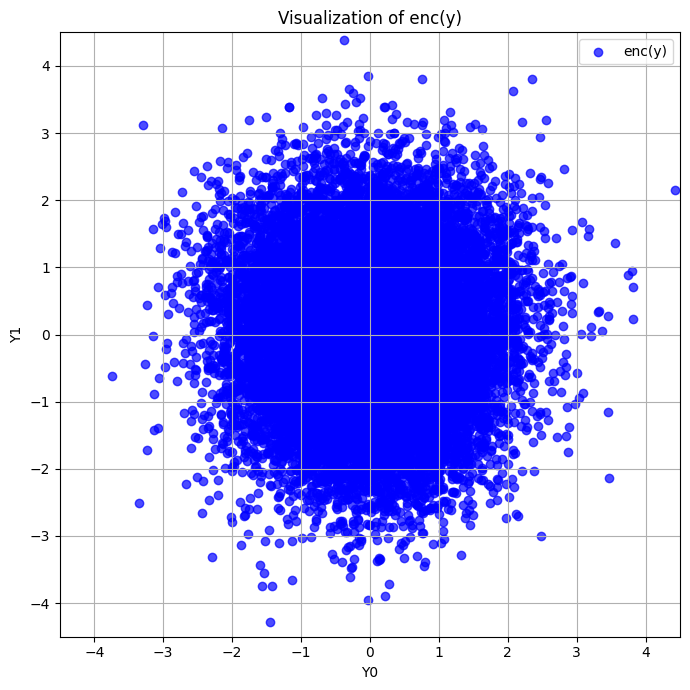} \hfill
        \includegraphics[width=0.22\linewidth]{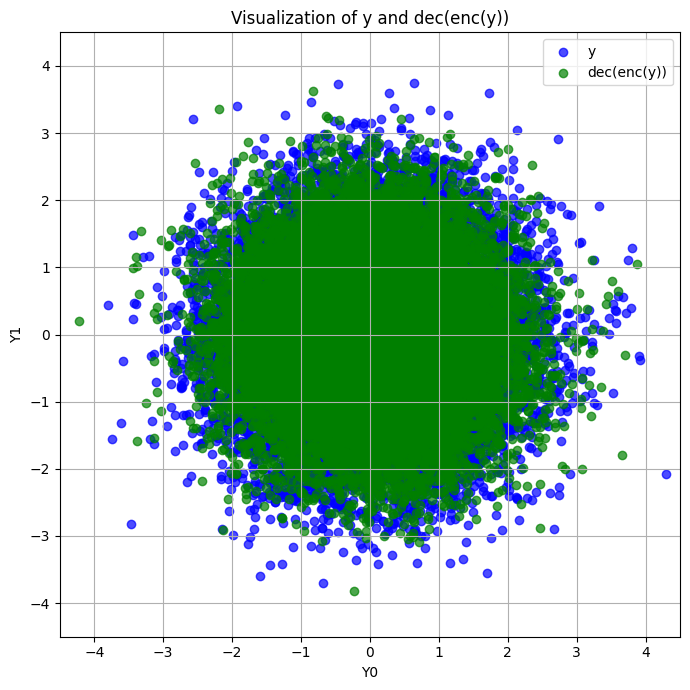}
        \\
        
        (a) CVAE & (b) VAE 
    \end{tabular}
    
    \caption{The reconstruction region of CVAE (subfigure (a)) and the VAE (subfigure (b)) given a fixed $x$. In each subfigure, the left image visualizes the encoded region $\{ \mathcal{E}(Y^x_i,x)) \}_{i \leq N}$ in the $Z$'s space; the right image visualizes the decoded region $\{ \mathcal{D}ec(\mathcal{E}(Y^x_i,x))\}_{i\leq N} $ in the $Y$'s space.}
    \label{fig.cvae_vae}
\end{figure}

\textbf{VAE vs. CVAE.} In \cite{feldman2023calibrated}, the authors used the conditional variational auto-encoder \cite{sohn2015learning}, where the inference model $q_\phi(\mathbf{Z} | \mathbf{Y})$ and the generative model $p_\theta(\mathbf{Y}|\mathbf{Z})$ are respectively replaced with $q_\phi(\mathbf{Z} |\mathbf{X}, \mathbf{Y})$ and $p_\theta(\mathbf{Y}|\mathbf{X},\mathbf{Z})$. However, the CVAE can be very sensitive to the input condition $\mathbf{X}$, making it easy to collapse when conditioning on a fixed $x$. Therefore, we turn to use VAE, which can well reconstruct the output even when conditioning on a fixed $x$.

To illustrate, we train both a CVAE and a VAE on the dataset $\{X_i, Y_i\}$ in the nonlinear setting. At test time, we fix $X = x$ and generate samples $\{Y^x_1, \dots, Y^x_N\}$ from the conditional model $Y \mid X = x$. We then visualize the reconstruction regions: for the CVAE, $\{\mathcal{D}ec_{\mathrm{CVAE}}(\mathcal{E}_{\mathrm{CVAE}}(Y^x_i, x))\}_{i \leq N}$, and for the VAE, $\{\mathcal{D}ec_{\mathrm{VAE}}(\mathcal{E}_{\mathrm{VAE}}(Y^x_i, x))\}_{i \leq N}$. As shown in Fig.~\ref{fig.cvae_vae}, the VAE can reconstruct outputs faithfully when conditioned on $x$, whereas the CVAE reconstructions collapse into a much smaller region.

\subsection{Stable Diffusion Model}

We begin by training the VAE and use its encoder $\mathcal{E}(Y_i)$ to obtain the low-dimensional latent representation $Z^0_i = \mathcal{E}(Y_i)$ for each $i$. We then apply the diffusion process in the latent space to evolve $Z^0_i$ into $Z^t_i$, and finally reconstruct $\widehat{Y}_i$ through the decoder $\mathcal{D}ec_\phi(Z^t_i)$.

First, we consider the forward noising process: 
$$
q(z_t \mid z_{t-1}) = \mathcal{N}\!\left(z_t \;\middle|\; \sqrt{1-\beta_t}\, z_{t-1}, \, \beta_t I \right),
$$
which admits the closed-form
$$
q(z_t \mid z_0) = \mathcal{N}\!\left(z_t \;\middle|\; \sqrt{\alpha_t}\, z_0, \, (1-\alpha_t) I \right), \quad \alpha_t = \prod_{s=1}^t (1-\beta_s).
$$
Thus, at step $T$, we have $z_T \sim \mathcal{N}(0, I)$. The reverse process is parameterized by a neural network $\epsilon_\theta$ (where we choose MLP on synthetic data), which predicts the noise component of $z_t$. Conditioning on $x$, the model learns to approximate
$$
\epsilon_\theta(z_t, t, x) \approx \epsilon, \ \epsilon \sim \mathcal{N}(0, I). 
$$
The denoising distribution is then given by: 
$$
p_\theta(z_{t-1} | z_t, x) = \mathcal{N}(\mu_\theta(z_t, t, x), \Sigma_t I),
$$
with the mean parameter
$$
\mu_\theta(z_t, t, x) = \frac{1}{\sqrt{1-\beta_t}} \Big( z_t - \beta_t \, \epsilon_\theta(z_t, t, x) \Big).
$$
The training is based on denoising score matching. Given $(x,y)$ sampled from the dataset, we encode $z_0 = \mathcal{E}_\phi(y)$, draw $t \sim \text{Unif}(\{1,\dots,T\})$, and generate $z_t$ via the forward process. The objective is
$$
\mathcal{L}(\theta) = \mathbb{E}_{z_0, x, t, \epsilon} \left[ \Vert \epsilon - \epsilon_\theta(z_t, t, x) \Vert_2^2  \right].
$$
At inference, one begins with Gaussian noise $z_T \sim \mathcal{N}(0, I)$ and applies the learned reverse process to obtain a latent $z_0$. The decoder then maps $z_0$ back to image space:
$$
\widehat{Y} = \mathcal{D}ec_\phi(z_0).
$$

\subsection{Computing Score}
To compute the score $E_i^+$ in (\ref{eq.score_low}), we view this as a quadratic programming problem bounded by multiple inequalities:
\begin{equation*}
    \min_a\|a-Z_i\|^2 \ \mathrm{s.t.}  \mathbf{u}_k^T a\geq f_\beta(X_i, \mathbf{u}_k), \|\mathbf{u}_k\|=1, \ \forall k=1,...,K
\end{equation*}
Now our goal is to find a point $a^{\star}$ that minimizes the distance to the target point $Z_i$ and satisfies all the half-space constraints. The score takes the value of the Euclidean distance from $Z_i$ to $a^{\star}$.

\subsection{Proofs of Section~\ref{sec.method}}
\label{append.proof}
\begin{proof}[Proof of Proposition~\ref{prop.gamma_convex}]
We now show that for any $\gamma$, $S^\gamma(x)$ is convex and compact. For any $z_1, z_2 \in S^{\gamma_{\mathrm{cal}}}(x)$, we have $a_1 \in R_\mathcal{Z}(x), a_2 \in R_\mathcal{Z}(x)$ respectively such that $d(a_1,z_1) \leq \gamma$ and $d(a_2,z_2) \leq \gamma$. Besides, as $R_\mathcal{Z}(x)$ is convex, we have $\alpha a_1 + (1 - \alpha)a_2 \in R_\mathcal{Z}(x)$. Then for any $0 < \alpha < 1$, we have:
    \begin{align*}
        \min_{a \in R_\mathcal{Z}(x)} d(a, \alpha z_1 + (1 - \alpha) z_2) & = d(\alpha a + (1-\alpha)a, \alpha z_1 + (1 - \alpha) z_2) \\
        &\leq d(\alpha a_1 + (1 - \alpha)a_2, \alpha z_1 + (1 - \alpha) z_2) \\
        & \overset{(1)}{\leq} \alpha \Vert a_1-z_1\Vert_2 + (1 - \alpha) \Vert a_2 - z_2 \Vert_2 \leq \gamma, 
    \end{align*}
    where ``(1)" is due to the jointly convexity of $d(\cdot, \cdot)$. The compactness follows from the compactness of $R_{\mathcal{Z}}(X)$ and that $\gamma_{\mathrm{cal}}$ is bounded. 
\end{proof}

\section{Experimental Details}  
\subsection{Synthetic Data Details} \label{append.gen_data}

\paragraph{Linear data generation.}
The generation of the condition vector $X$ and the response variable $Y$ in the \textit{linear} version of the synthetic data is defined as follows:
\begin{equation}
\label{eq:simu_linear}
\begin{aligned}
    &X \sim \text{Uniform}(0.8, 3.2)^p, \\
    &A \sim \mathcal{N}(0,1)^{p \times d}, \\
    &\epsilon \sim \mathcal{N}(0,\sigma^2)^d, \\
    &Y = XA + \epsilon,
\end{aligned}
\end{equation}
where $\text{Uniform}(a, b)$ is a uniform distribution on the interval $(a, b)$, $X \in \mathbb{R}^p$ is the condition vector, $A \in \mathbb{R}^{p \times d}$ is the coefficient matrix, $\epsilon \in \mathbb{R}^d$ is Gaussian noise with variance $\sigma^2$, and $Y \in \mathbb{R}^d$ is the response variable.

\paragraph{Nonlinear data generation.}
The \textit{nonlinear} version of the synthetic data is generated as follows:
\begin{equation}
\label{eq:simu_nonlinear}
\begin{aligned}
    &X \sim \text{Uniform}(0.8, 3.2)^p, \\
    &A \sim \mathcal{N}(0,1)^{p \times d}, \\
    &B \sim \mathcal{N}(0,1)^{p \times d}, \\
    &\epsilon \sim \mathcal{N}(0,\sigma^2)^d, \\
    &Y = XA + X^2B + \epsilon,
\end{aligned}
\end{equation}
where $X \in \mathbb{R}^p$, $A \in \mathbb{R}^{p \times d}$, $B \in \mathbb{R}^{p \times d}$, $\epsilon \in \mathbb{R}^d$, and $Y \in \mathbb{R}^d$. The term $X^2$ denotes element-wise squaring of $X$.

\subsection{Implementation Details} \label{append.implement}

\textbf{Network architectures.}
\begin{itemize}
    \item \emph{VAE Encoder/Decoder Hidden Dimensions:}
    \begin{itemize}
        \item Synthetic data: $[64,\, 128,\, 256,\, 256,\, 128,\, 64]$
        \item Image-to-text task: $[128,\, 256,\, 512,\, 512,\, 256,\, 128]$
    \end{itemize}
    \item \emph{Stable Diffusion Denoiser:}
    \begin{itemize}
        \item MLP hidden dimensions: $[128,\, 256,\, 128]$
        \item Time embedding dimension: $128$
    \end{itemize}
    \item \emph{DQR Network:}
    \begin{itemize}
        \item Hidden dimensions: $[8,\, 16,\, 8]$
    \end{itemize}
    \item Dropout (rate $0.1$) and batch normalization are applied for the image-to-text dataset.
\end{itemize}

\textbf{Training hyperparameters.}
\begin{itemize}
    \item \emph{VAE:} 
    \begin{itemize}
        \item Learning rate: $1 \times 10^{-3}$
        \item Activation: leaky ReLU (slope $0.2$)
    \end{itemize}
    \item \emph{Stable Diffusion:}
    \begin{itemize}
        \item Learning rate: $1 \times 10^{-4}$
        \item Number of diffusion timesteps (training): $1000$
    \end{itemize}
\end{itemize}

\textbf{DQR directions.}
\begin{itemize}
    \item Each gradient step uses $1024$ distinct directions, sampled from a fixed set of $2048$ directions generated before training.
\end{itemize}

\textbf{Discretization.}
\begin{itemize}
    \item Number of grid points to decode region in $\mathcal{Z}$ space: $2\times 10^4$
    \item Feldman grid in $\mathcal{Z}$ space: $2\times 10^4$
    \item Feldman grid in $\mathcal{Y}$ space: $2\times 10^4$
    \item Number of grid points for area calculation in $\mathcal{Y}$ space: $2\times 10^4$
\end{itemize}

\textbf{Hardware.}
\begin{itemize}
    \item Synthetic data simulations: NVIDIA RTX A6000 GPU (48GB VRAM)
    \item Image-to-text task: NVIDIA H100 GPU (80GB HBM3)
\end{itemize}

\section{Ablation Study} \label{append.ablation}
\subsection{Effect of the KL Regularization Weight in VAE} \label{append.vae_kl} 
To investigate the effect of the KL regularization weight ($\beta$) in the VAE training loss, we conduct an ablation study on the \textit{nonlinear} synthetic data, following the same setup as Section~\ref{sec.exp_simul}. As shown in Table~\ref{tab:ab_VAE}, when $\beta=0.001$, our method achieves both the target coverage ($\alpha = 0.1$) and a compact informative region. Therefore, we set $\beta=0.001$ for all experiments.

\begin{table}[htbp]
\centering
\caption{Ablation study on the effect of the KL regularization weight $\beta$ in the VAE loss. The table reports the coverage ratios and the area of the region $C_{\mathcal{Y}}$ on the \textit{nonlinear} synthetic dataset for different values of $\beta$. The target nominal level is $\alpha=0.1$.}
\label{tab:ab_VAE}
\resizebox{0.8\linewidth}{!}{
\begin{tabular}{cccccc}
\toprule 
Metric & $\beta=0.1$ & $\beta=0.01$ & $\beta=0.001$ & $\beta=0.0001$ & $\beta=0.00001$ \\
\midrule
coverage of $R_{\mathcal{Z}}$ & $0.5570$ & $0.5525$ & $0.4465$ & $0.4203$ & $0.4910$ \\
coverage of $C_{\mathcal{Z}}$ & $0.8883$ & $0.8995$ & $0.8953$ & $0.8908$ & $0.8945$ \\
coverage of $R_{\mathcal{Y}}$ & $0.9200$ & $0.7280$ & $0.5387$ & $0.5060$ & $0.5730$ \\
coverage of $C_{\mathcal{Y}}$ & $0.9945$ & $0.9525$ & $0.8915$ & $0.8832$ & $0.8895$ \\
area of $C_{\mathcal{Y}}$ & $1044.54$ & $320.58$ & $232.73$ & $213.26$ & $249.41$ \\
\bottomrule
\end{tabular}
}
\end{table}

\subsection{Choice of Latent Generative Model} \label{append.latent_model}
To explore the effect of different latent generative models in our framework, we compare the Variational Autoencoder (VAE) and Stable Diffusion (SD) models on the \textit{nonlinear} synthetic dataset, using the same experimental setup as Section~\ref{sec.exp_simul}. For the VAE, the KL regularization weight is set to $\beta=0.001$. For the SD model, we use an MLP network as the denoiser; for implementation details regarding SD, please refer to Section~\ref{append.implement}.

\paragraph{Ablation study: SD denoiser architecture and inference steps.}
We first conduct an ablation study on the SD model to investigate the effect of (a) whether the denoiser is conditioned on input, and (b) the number of inference steps $T$ (ranging from 10 to 50). We set the target nominal level to $\alpha=0.1$.
As shown in Tables~\ref{tab:ab_SD_con} and \ref{tab:ab_SD_wo_con}, all values of $T$ achieve the target coverage, with $T=10$ providing the tightest coverage and, consequently, the least conservative region
Therefore, we use the conditional denoiser with $T=10$ in all subsequent SD experiments.

\begin{table}
\centering
\caption{Ablation study of the SD conditional denoiser: coverage and area for different inference steps $T$ on the \textit{nonlinear} synthetic dataset ($\alpha=0.1$).}
\label{tab:ab_SD_con}
\resizebox{0.6\linewidth}{!}{
\begin{tabular}{cccccc}
\toprule 
Metric & $T=10$ & $T=20$ & $T=30$ & $T=40$ & $T=50$ \\
\midrule
coverage of $R_{\mathcal{Z}}$ & $0.4452$ & $0.4725$ & $0.4412$ & $0.5503$ & $0.5570$ \\
coverage of $C_{\mathcal{Z}}$ & $0.8968$ & $0.9025$ & $0.8952$ & $0.9000$ & $0.8998$ \\
coverage of $R_{\mathcal{Y}}$ & $0.5595$ & $0.6218$ & $0.6338$ & $0.8055$ & $0.8475$ \\
coverage of $C_{\mathcal{Y}}$ & $0.9065$ & $0.9405$ & $0.9675$ & $0.9773$ & $0.9883$ \\
area of $C_{\mathcal{Y}}$ & $239.05$ & $285.93$ & $367.64$ & $405.89$ & $525.12$ \\
\bottomrule
\end{tabular}
}
\end{table}

\begin{table}
\centering
\caption{Ablation study of the SD unconditional denoiser: coverage and area for different inference steps $T$ on the \textit{nonlinear} synthetic dataset ($\alpha=0.1$).}
\label{tab:ab_SD_wo_con}
\resizebox{0.6\linewidth}{!}{
\begin{tabular}{cccccc}
\toprule 
Metric & $T=10$ & $T=20$ & $T=30$ & $T=40$ & $T=50$ \\
\midrule
coverage of $R_{\mathcal{Z}}$ & $0.4460$ & $0.5320$ & $0.5333$ & $0.5507$ & $0.5620$ \\
coverage of $C_{\mathcal{Z}}$ & $0.9008$ & $0.8900$ & $0.8988$ & $0.8960$ & $0.9058$ \\
coverage of $R_{\mathcal{Y}}$ & $0.5668$ & $0.6893$ & $0.7418$ & $0.8085$ & $0.8613$ \\
coverage of $C_{\mathcal{Y}}$ & $0.9203$ & $0.9468$ & $0.9638$ & $0.9790$ & $0.9923$ \\
area of $C_{\mathcal{Y}}$ & $271.26$ & $309.56$ & $370.03$ & $415.60$ & $542.00$ \\
\bottomrule
\end{tabular}
}
\end{table}

\paragraph{Comparison between SD and VAE as latent generative models.}
Finally, we compare the performance of SD (with conditional denoiser, $T=10$) and VAE as the latent generative model in our CReL framework, evaluating both the coverage and the area on the \textit{nonlinear} synthetic dataset for two nominal levels ($\alpha=0.02, 0.10$), as summarized in Table~\ref{tab:vae_sd_simulation}. 
Both models achieve the target coverage, but the VAE consistently produces a more compact (informative) covered region in $\mathcal{Y}$. Based on these results, we use the VAE as the default latent generative model in all main experiments, due to its greater informativeness while maintaining desired coverage.

\begin{table}
\centering
\caption{Comparison of VAE and SD as latent generative models in the CReL framework on the \textit{nonlinear} synthetic dataset. Coverage and area metrics are reported for different $\alpha$.}
\label{tab:vae_sd_simulation}
\resizebox{0.6\linewidth}{!}{
\begin{tabular}{lcccccc}
\toprule 
\multirow{2}{*}{$\alpha$} & 
\multicolumn{4}{c}{Coverage} &
\multicolumn{2}{c}{Area in $\mathcal{Y}$} \\
\cmidrule(lr){2-5} \cmidrule(lr){6-7} & VAE-$\mathcal{Z}$ & VAE-$\mathcal{Y}$ & SD-$\mathcal{Z}$ & SD-$\mathcal{Y}$ & VAE & SD \\
\midrule
$0.02$ & $0.9770$  & $0.9760$ & $0.9810$ & $0.9843$ & $398.51$ & $432.99$  \\
0.10 & $0.8953$  & $0.8915$ & $0.8968$ & $0.9065$ & $232.73$ & $239.05$  \\
\bottomrule
\end{tabular}
}
\end{table}

\subsection{{Robustness of Coverage Guarantees to Similarity Metric}} 
\label{append.mae_results}

{
CReL is designed to be metric-agnostic, successfully quantifying reliability with respect to a user's specific concern. To demonstrate that our coverage guarantees are independent of the metric choice, we conducted a new simulation experiment using the Mean Absolute Error (MAE) as the metric $\rho$. As shown in Tab.~\ref{tab:mae_coverage}, the empirical coverage remains valid and closely tracks the nominal levels, just as it did for MSE in the main text.
}

\begin{table}[htbp]
\centering
\caption{{Coverage ratios and areas on the nonlinear synthetic dataset using MAE as the similarity metric.}}
\label{tab:mae_coverage}
\begin{tabular}{cccc}
\toprule
$\alpha$ & Coverage in $\mathcal{Z}$ & Coverage in $\mathcal{Y}$ & Area in $\mathcal{Y}$ \\
\midrule
$0.02$ & $0.9743$ & $0.9735$ & $407.0$ \\
$0.04$ & $0.9563$ & $0.9540$ & $353.8$ \\
$0.06$ & $0.9403$ & $0.9405$ & $295.5$ \\
$0.08$ & $0.9100$ & $0.9147$ & $267.0$ \\
$0.10$ & $0.8905$ & $0.8938$ & $243.1$ \\
\bottomrule
\end{tabular}
\end{table}

\subsection{Comparison with PCP}
\label{append.pcp_comparison}

We compare CReL with PCP~\citep{wang2022probabilistic} on the nonlinear simulation dataset at $\alpha=0.1$. CReL uses the VAE latent generative model. PCP uses its default multidimensional setting with $K=1000$, \texttt{caltype=uniform}, and \texttt{md\_type=kmn}; PCP results are averaged over $5$ independent runs.

\begin{table}[htbp]
\centering
\caption{Comparison with PCP on the nonlinear simulation dataset at $\alpha=0.1$.}
\label{tab:pcp_comparison}
\begin{tabular}{lcc}
\toprule
Method & Coverage in $\mathcal{Y}$ & Area in $\mathcal{Y}$ \\
\midrule
CReL (VAE) & $0.8915$ & $232.70$ \\
PCP & $0.9001$ & $854.24$ \\
\bottomrule
\end{tabular}
\end{table}

\subsection{Sensitivity to LGM Reconstruction Error}
\label{append.lgm_sensitivity}

We conduct a sensitivity analysis on the nonlinear simulation dataset with the VAE latent generative model and $\alpha=0.1$. During VAE training, we monitor epochs $[1,5,10,100,183,300]$, where epoch $183$ is the best epoch. Table~\ref{tab:lgm_sensitivity} reports representative checkpoints with available coverage and area results. Coverage is stable once the reconstruction loss becomes sufficiently small, but fails when the reconstruction error is large.

\begin{table}[htbp]
\centering
\caption{Sensitivity analysis of coverage with respect to VAE reconstruction error on the nonlinear simulation dataset at $\alpha=0.1$.}
\label{tab:lgm_sensitivity}
\resizebox{0.65\linewidth}{!}{
\begin{tabular}{lcccccc}
\toprule
Epoch & $1$ & $5$ & $10$ & $100$ & $183$ & $300$ \\
\midrule
$\mathcal{L}$ & $0.3740$ & $0.0197$ & $0.0112$ & $0.0086$ & $0.0080$ & $0.0084$ \\
Coverage in $\mathcal{Y}$ & $0.0205$ & $0.8468$ & $0.8830$ & $0.8972$ & $0.8915$ & $0.8955$ \\
Area in $\mathcal{Y}$ & $23.24$ & $287.12$ & $278.39$ & $235.49$ & $232.70$ & $246.78$ \\
\bottomrule
\end{tabular}
}
\end{table}

The failure at early epochs is caused by large reconstruction error, while the coverage becomes close to the target level after epoch $10$. This supports that a sufficiently trained LGM can avoid substantial bias in the final reliability evaluation.

\subsection{Computational Cost of LGM Training}
\label{append.lgm_training_cost}

We report the empirical overhead of training the VAE used as the latent generative model in Table~\ref{tab:lgm_training_cost}. Experiments are run on an NVIDIA GeForce RTX 4090 48GB GPU with batch size $512$. This cost is incurred only once for each target generative model as an offline preprocessing step, after which the calibrated latent-space pipeline can be reused to evaluate many test examples.

\begin{table}[htbp]
\centering
\caption{Training cost of the VAE latent generative model.}
\label{tab:lgm_training_cost}
\begin{tabular}{lcccc}
\toprule
Setting & Metric & Epochs & Time (min) & Peak VRAM (GB) \\
\midrule
Synthetic & MSE & $385$ & $2.80$ & $0.06$ \\
Image-to-text (BLIP-base) & CLIP-SIM & $484$ & $15.11$ & $0.22$ \\
\bottomrule
\end{tabular}
\end{table}

\subsection{Effect of Latent Space Dimensionality in VAE} \label{append.vae_zdim}

To determine the appropriate dimensionality of the VAE latent space for the image-to-text task, we conduct an ablation study by varying the latent dimension $r$ and evaluating its impact on the loss value (BERT-SIM, BLIP-large). 
The results, shown in Table~\ref{tab:ab_VAE_zdim}, indicate that the loss remains relatively stable across a wide range of latent dimensions. Based on these observations, we empirically set $r=50$ for all image-to-text experiments.

\begin{table}
\centering
\caption{Ablation study on the dimensionality of the VAE latent space in the image-to-text task. The table reports the loss value $\mathcal{L}$ for different choices of the latent dimension $r$.}
\label{tab:ab_VAE_zdim}
\small
\begin{tabular}{ccccccc}
\toprule 
$r$ & $10$ & $20$ & $50$ & $100$ & $200$ & $300$ \\
\midrule
$\mathcal{L}$ & $0.0442$ & $0.0418$ & $0.0418$ & $0.0442$ & $0.0422$ & $0.0422$ \\
\bottomrule
\vspace{-0.1in}
\end{tabular}
\end{table}

\subsection{Effect of Initial Points}  \label{append.ab_num_z0}

We analyze how the number of initial points (\( z_0 \)) affects the accuracy of reliability estimates.
As described in Section~\ref{sec.method}, the optimization of \ref{eq.obj_new} combines the projected gradient descent and random search, where \( \text{num}_{z_0} \) is a key hyperparameter. A larger \( \text{num}_{z_0} \) improves estimation accuracy but increases computational cost. 

To study its impact, we evaluate four generative models on the image-to-text task using CReL-CLIP and CReL-BERT at \(\alpha = 0.1\), and four text-to-image models using CReL-CLIP and CReL-DINO at \(\alpha = 0.1\). 
As shown in Fig.~\ref{fig.i2t-z0} and Fig.~\ref{fig.t2i-z0}, increasing \( \text{num}_{z_0} \) generally decreases the estimated reliability score until the curves stabilize.
To balance reliability and computational cost, we set \( \text{num}_{z_0} = 50 \) for CReL-CLIP and CReL-BERT, and \( \text{num}_{z_0} = 100 \) for CReL-DINO in the text-to-image experiments.

\paragraph{Initialization seed sensitivity.}
\label{append.init_seed_sensitivity}
We further test the sensitivity of the projected gradient descent procedure to random initialization, with results summarized in Table~\ref{tab:init_seed_sensitivity}. On the image-to-text task, we evaluate BLIP-base using CLIP-SIM with $\alpha=0.1$ and $\text{num}_{z_0}=50$ over $10$ repeated runs. The standard deviation of the resulting CReL-CLIP score is $0.00027$, indicating that the optimization is highly stable across random initializations.

\begin{table}[htbp]
\centering
\caption{Sensitivity of CReL-CLIP to random initialization on image-to-text evaluation.}
\label{tab:init_seed_sensitivity}
\begin{tabular}{lcccccc}
\toprule
Task & Model & Metric & $\alpha$ & $\text{num}_{z_0}$ & Runs & Std. \\
\midrule
Image-to-text & BLIP-base & CLIP-SIM & $0.1$ & $50$ & $10$ & $0.00027$ \\
\bottomrule
\end{tabular}
\end{table}


\begin{figure}[htbp]
    \centering
    \renewcommand{\arraystretch}{1.2}
    
    \begin{tabular}{c} 
        \includegraphics[width=0.8\linewidth]{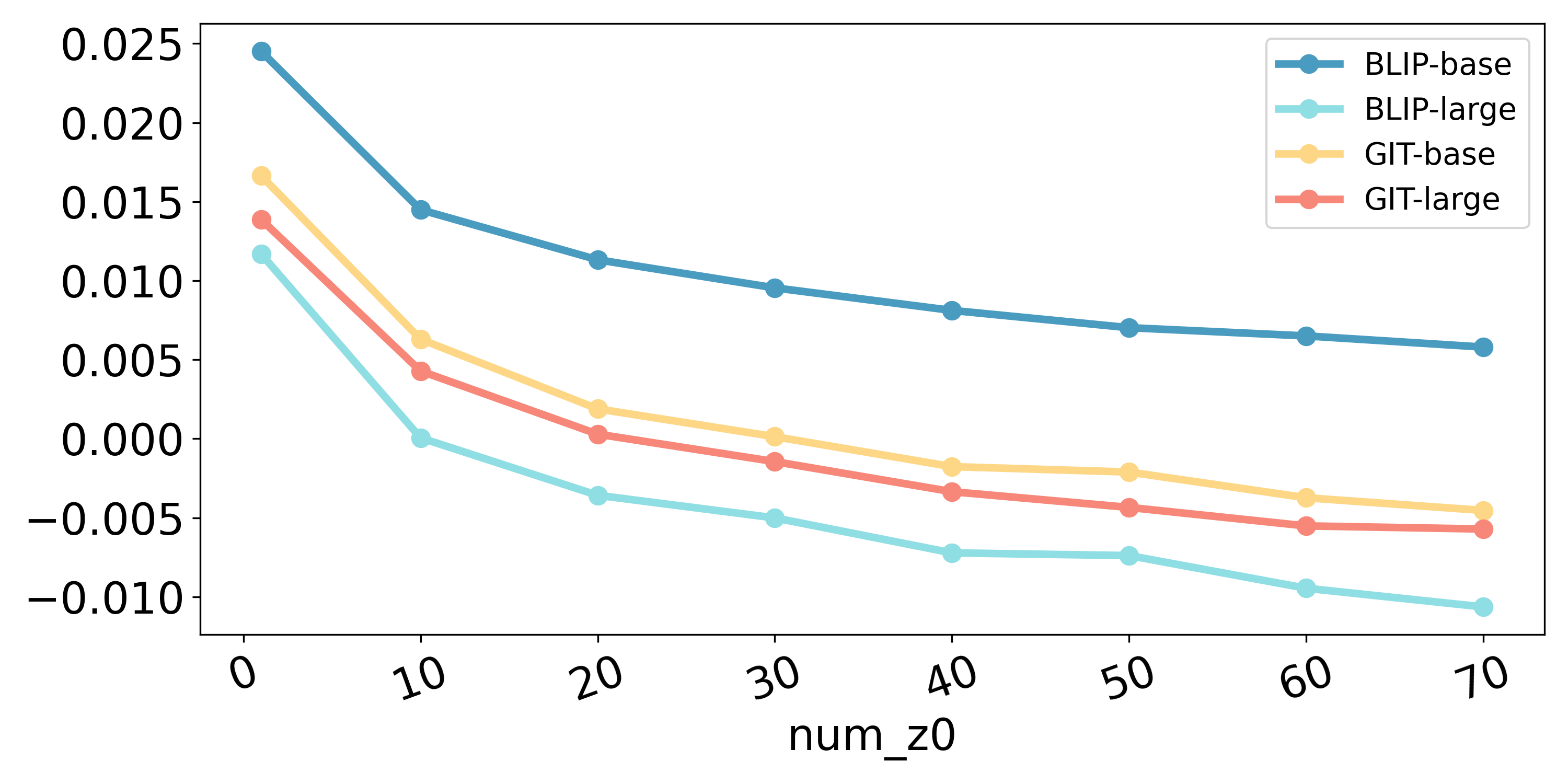} \\
        (a) CReL-CLIP \label{fig.i2t-z0-clip} \\
        
        \\[10pt] 
        
        \includegraphics[width=0.8\linewidth]{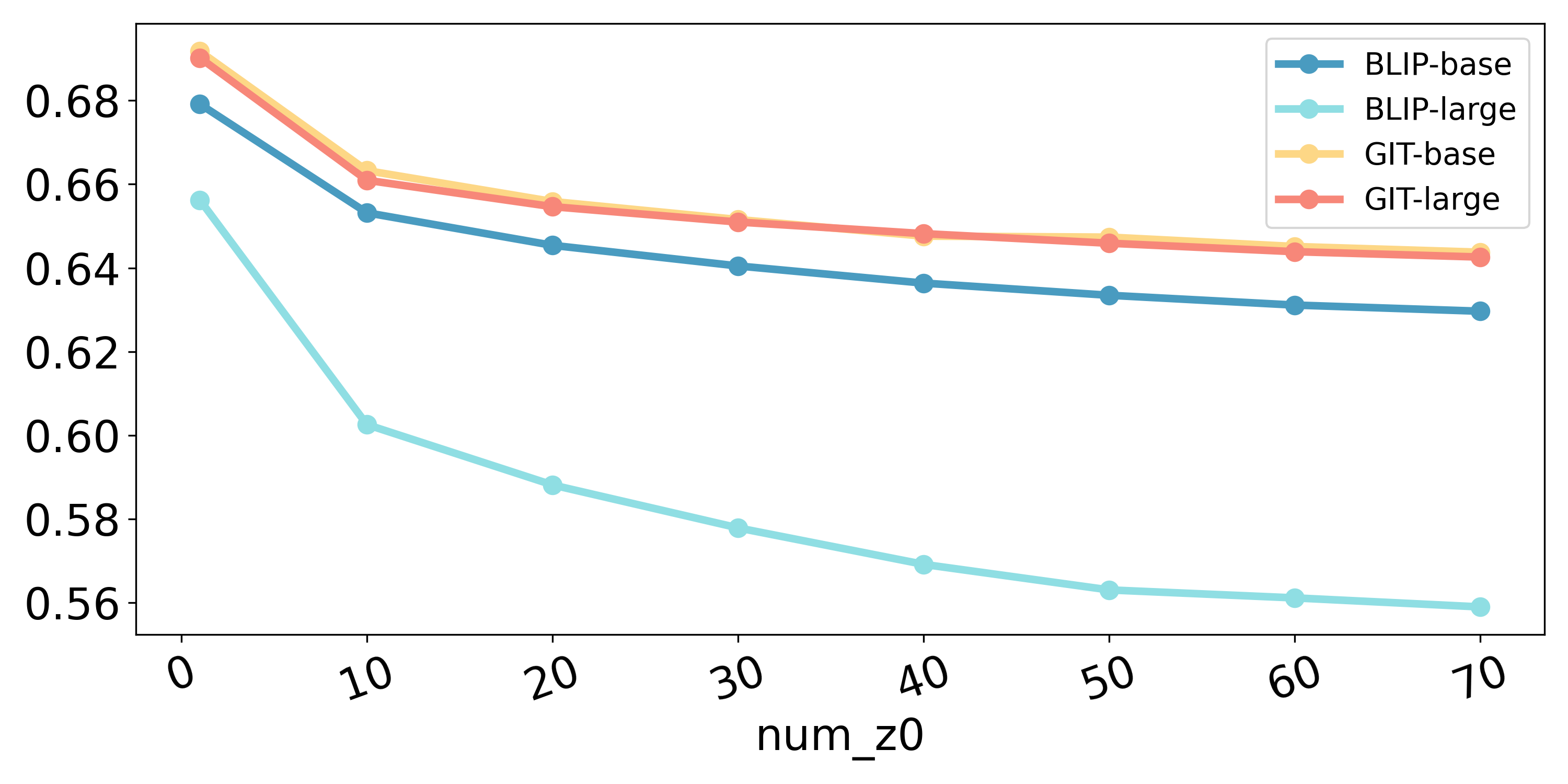} \\
        (b) CReL-BERT \label{fig.i2t-z0-bert}
    \end{tabular}

    \caption{Effect of the number of initial points ($z_0$) on CReL-$\rho$ for the image-to-text task.}
    \label{fig.i2t-z0}
    \vspace{-0.1in}
\end{figure}

\begin{figure}[htbp]
    \centering
    \renewcommand{\arraystretch}{1.2}
    
    \begin{tabular}{c}
        \includegraphics[width=0.8\textwidth]{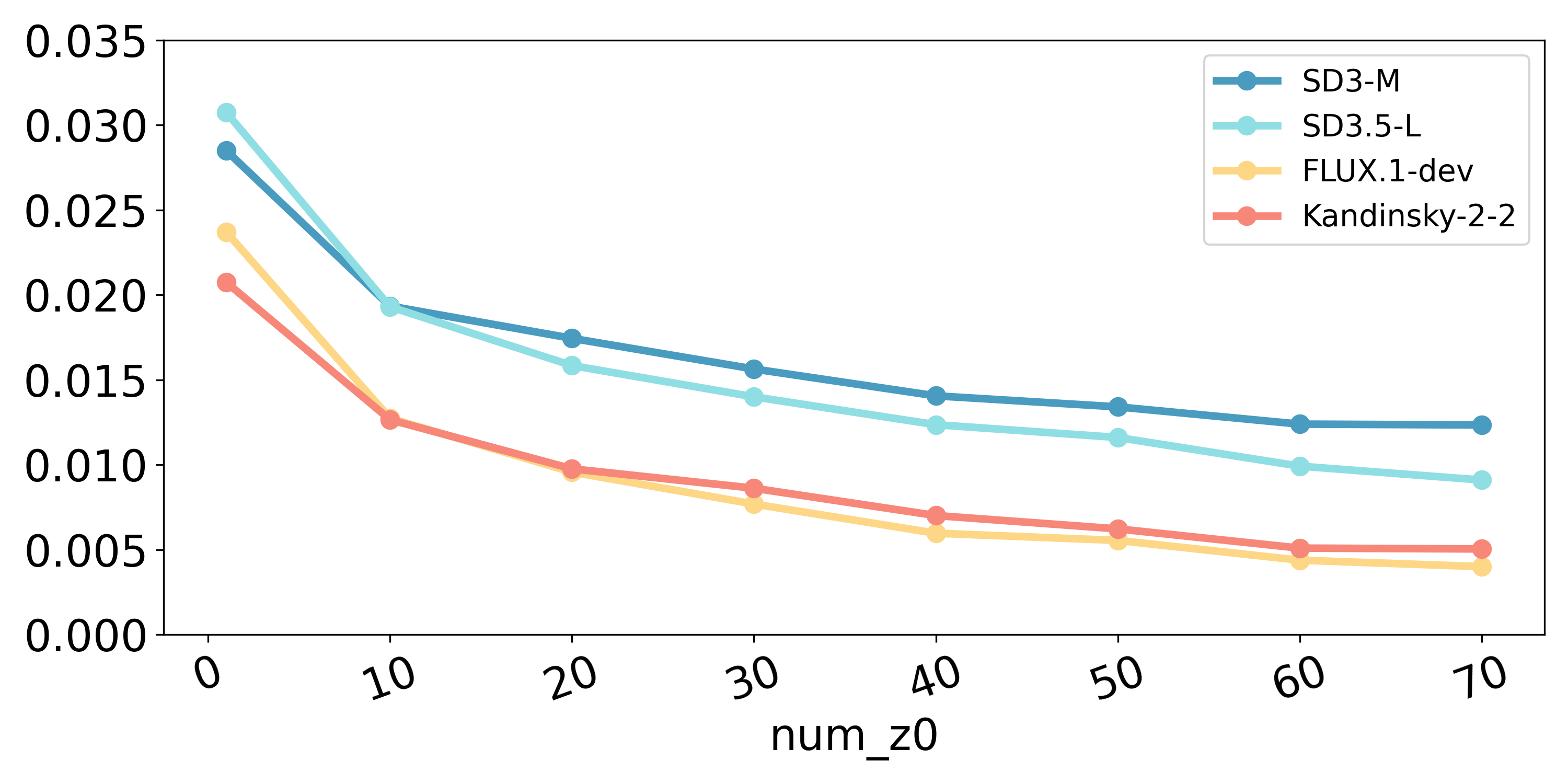} \\
        (a) CReL-CLIP \label{fig.t2i-z0-clip} \\
        
        \\[10pt]
        
        \includegraphics[width=0.8\textwidth]{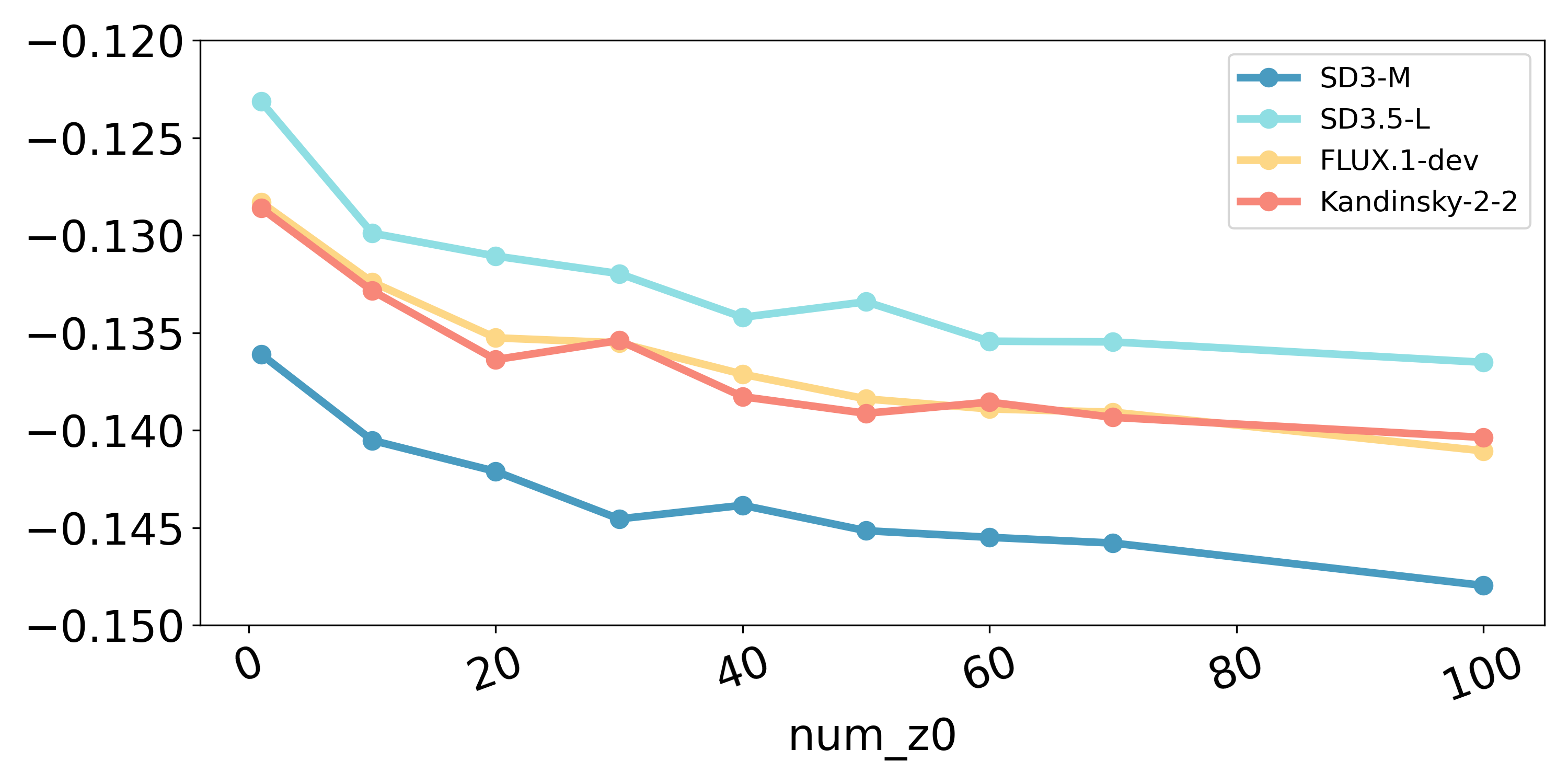} \\
        (b) CReL-DINO \label{fig.t2i-z0-dino}
    \end{tabular}
    \caption{
    Effect of the number of initial points (\( z_0 \)) on CReL-CLIP and CReL-DINO for the text-to-image task.
    \label{fig.t2i-z0}
    }
\end{figure}

\section{Computational Complexity Comparison}
\label{append.compute_complex}

We compare the computational complexity of our  calibration scheme with the grid‐based discretization method of Feldman \emph{et al.} \citep{feldman2023calibrated}. While their approach incurs exponential costs in both the latent and original data spaces, our method operates directly in the lower‐dimensional embedding and leverages DQR for initialization, yielding a significant reduction in computational cost. Here, we denote $n_{\mathrm{cal}}:=|\mathcal{I}_{\mathrm{cal}}|$ as the sample size in the calibration set. 

\paragraph{Feldman \emph{et al.} (grid‐based).}%
They discretize both the $r$‐dimensional latent space and the $d$‐dimensional original space using a uniform grid of size $m$ per axis:
\begin{itemize}
  \item Latent space discretization: \quad $O(m^r)$.
  \item Original space discretization: \quad $O(m^d)$.
  \item Quantile initialization: computing the 90th percentile of all pairwise distances to obtain $\gamma_{\text{init}}$ requires $O\!\bigl(n_{\mathrm{cal}} \cdot m^{2d} \cdot d \cdot \log m\bigr)$. 
\end{itemize}

\paragraph{Our calibration scheme.}
Our method avoids costly discretization and initialization by:
\begin{itemize}
  \item Latent‐space region: directly constructing the quantile region in the $r$‐dimensional embedding space, bypassing any $m$‐grid.
  \item DQR initialization: using the region obtained from DQR as the calibration starting point, eliminating the $O(n_{\mathrm{cal}} \cdot m^{2d} \cdot d \cdot \log m)$ step.
  \item Score computation: performing quadratic‐programming updates in $O\!\bigl(n_{\mathrm{cal}} \cdot T \cdot q \cdot r\bigr)$, where $T$ is the number of iterations and $q$ is the number of calibration directions.
\end{itemize}

\paragraph{Overall Improvement.}   By operating in the lower‐dimensional latent space ($r \ll d$) and leveraging our calibration scheme, we reduce the computational complexity of the calibration step from $O\!\bigl(n_{\mathrm{cal}} \cdot m^{2d} \cdot d \cdot \log m\bigr)$ to $ O\!\bigl(n_{\mathrm{cal}} \cdot T \cdot q \cdot r\bigr)$.

For empirical runtime comparisons, see Appendix~\ref{append.runtime}.

\section{Calibration Scheme Runtime Comparison} \label{append.runtime}
\subsection{Setup}

\textbf{Simulations on synthetic data.}  
We evaluate the runtime of our calibration scheme against the grid‐based discretization method of Feldman \emph{et al.} \citep{feldman2023calibrated} using a \emph{linear} synthetic dataset (generation details in Appendix~\ref{append.gen_data}).  We generate \(n = 50{,}000\) samples \(p = 50\) and \(d = 20\), and split them as follows:
\[
\begin{aligned}
&\text{VAE training set:}        &&60\% \quad(30{,}000\text{ samples})\\
&\text{DQR training set:}         &&24\% \quad(12{,}000\text{ samples})\\
&\text{Calibration set:}          &&8\%  \quad(4{,}000\text{ samples})\\
&\text{Test set:}                 &&8\%  \quad(4{,}000\text{ samples})
\end{aligned}
\]
The calibration set is used to measure the runtime of both schemes.

\textbf{Implementation details.}  
For the VAE, we vary the latent dimension \(r\) from 2 to 12 and fix the loss weight \(\beta = 0.01\).  In the grid‐based scheme~\cite{feldman2023calibrated}, we use a uniform grid of size \(m = 5\) per latent axis and fix the total number of grid points in the original space to \(300{,}000\) to control memory usage.  
For DQR, the input size is \( p + d \), and each gradient step uses $1024$ directions with \( \alpha = 0.1 \).  
All data undergo $L_2$ normalization before training. 
Further details are provided in Appendix~\ref{append.implement}. 

\subsection{Results}



\begin{figure}[ht]
    \centering
    \renewcommand{\arraystretch}{1.2}
    \setlength{\tabcolsep}{0pt}
    
    \begin{tabular}{c}
        \includegraphics[width=0.95\linewidth]{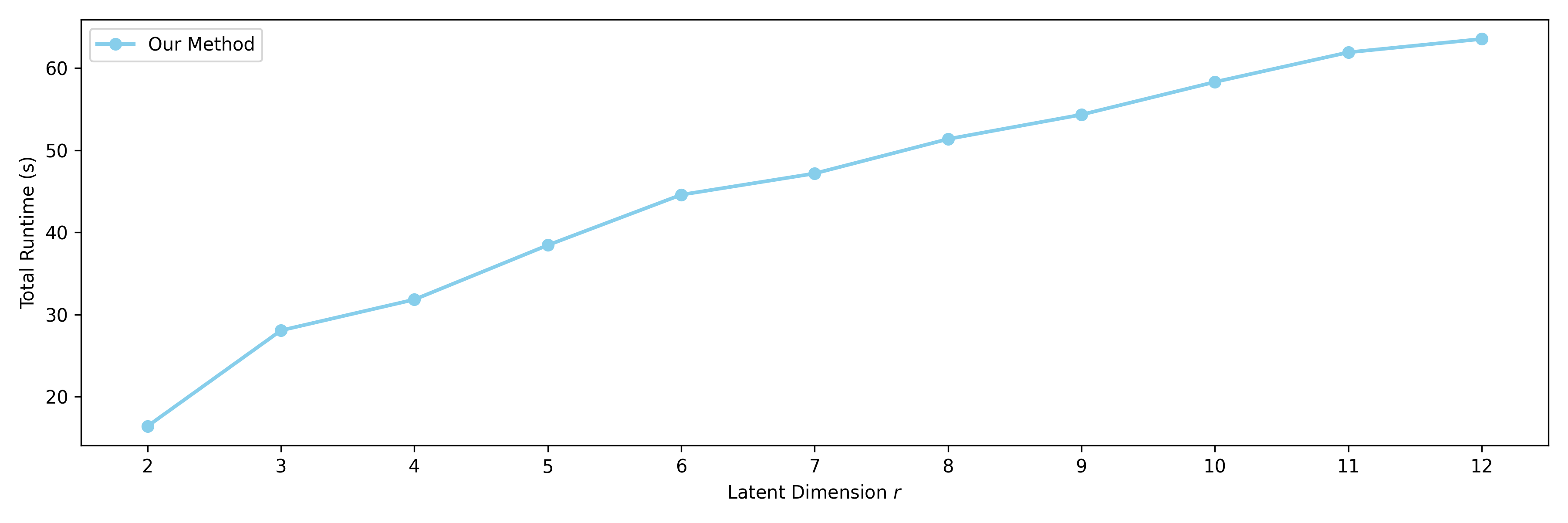} \\
        (a) Our method. \\
        
        \\[15pt] 
        
        \includegraphics[width=0.95\linewidth]{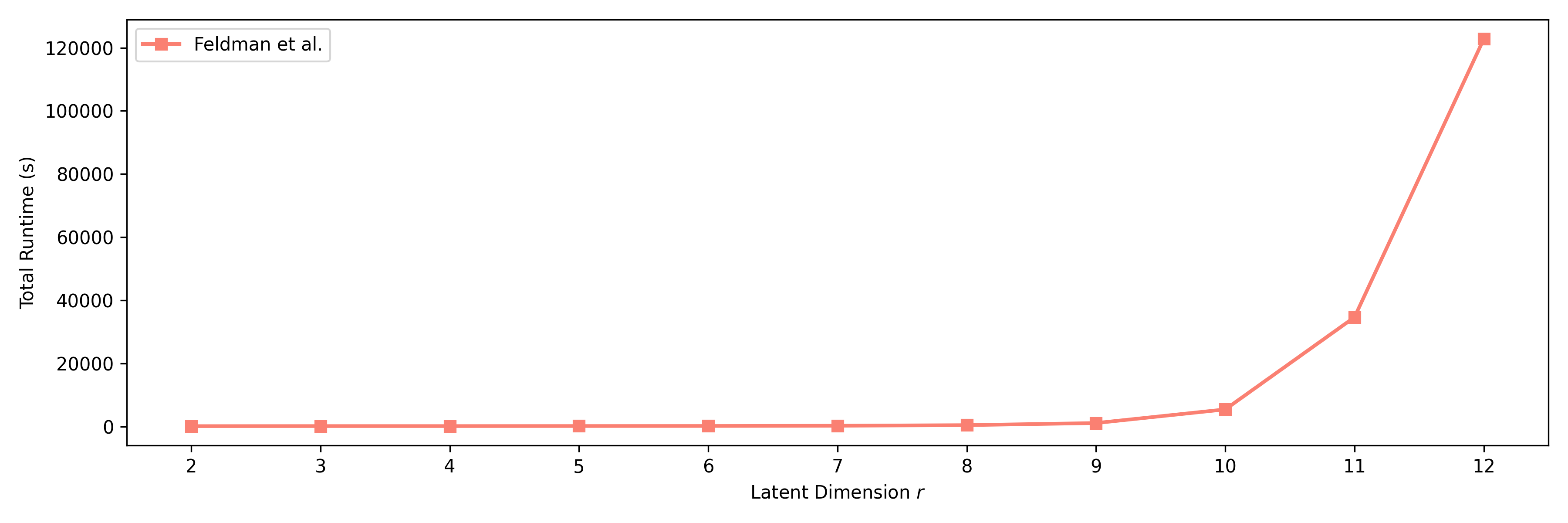} \\
        (b) Feldman \textit{et al.}~\cite{feldman2023calibrated}.
    \end{tabular}

    \caption{Comparison of total calibration runtime (in seconds) across different latent dimensions $r$. Our method exhibits favorable scalability, whereas the grid-based approach of~\cite{feldman2023calibrated} incurs significantly higher computational cost as $r$ increases.}
    \label{fig.runtime-comparison}
\end{figure}

We report the total calibration runtime (in seconds) for both our calibration scheme and the grid‐based discretization method of Feldman \emph{et al.} across different latent dimensions.
Several key observations emerge:

Our method exhibits near‐linear growth in runtime with respect to $r$.  Starting at approximately 16.4\,s for $r=2$, the total runtime increases modestly to about 63.5\,s at $r=12$, corresponding to an average incremental cost of under 5 s per additional latent dimension.  This behavior is consistent with our theoretical complexity (Appendix~\ref{append.compute_complex}), in which $r$ enters only linearly.

In contrast, the grid‐based approach exhibits exponential growth: its calibration time increases from 116.6\,s at $r=2$ to 433.3\,s at $r=8$ and finally to 122,795.0\,s at $r=12$.  Correspondingly, our method is over $8\times$ faster at $r=8$ (51.37\,s vs.\ 433.3\,s) and nearly $1,930\times$ faster at $r=12$ (63.54\,s vs.\ 122,795.0\,s), rendering the grid‐based scheme infeasible for moderate‑to‑high latent dimensions.  

These empirical results corroborate our theoretical complexity reduction and demonstrate that by operating directly in the lower‐dimensional embedding space, our  calibration scheme remains computationally feasible even as $r$ grows large.  This efficiency gain is critical for scaling conformal calibration to high‐dimensional prediction tasks.

\section{Additional Results} \label{append.results}
\subsection{Qualitative Results of Image-to-Text Task} 
\label{append.results1}

We provide additional qualitative examples for the image-to-text task. 
Figure~\ref{fig:appendix-clip} and Figure~\ref{fig:appendix-bert} present the results of CReL-CLIP and CReL-BERT under $\alpha=0.1$, respectively.

\begin{figure}[ht]
\centering
\includegraphics[width=\textwidth]{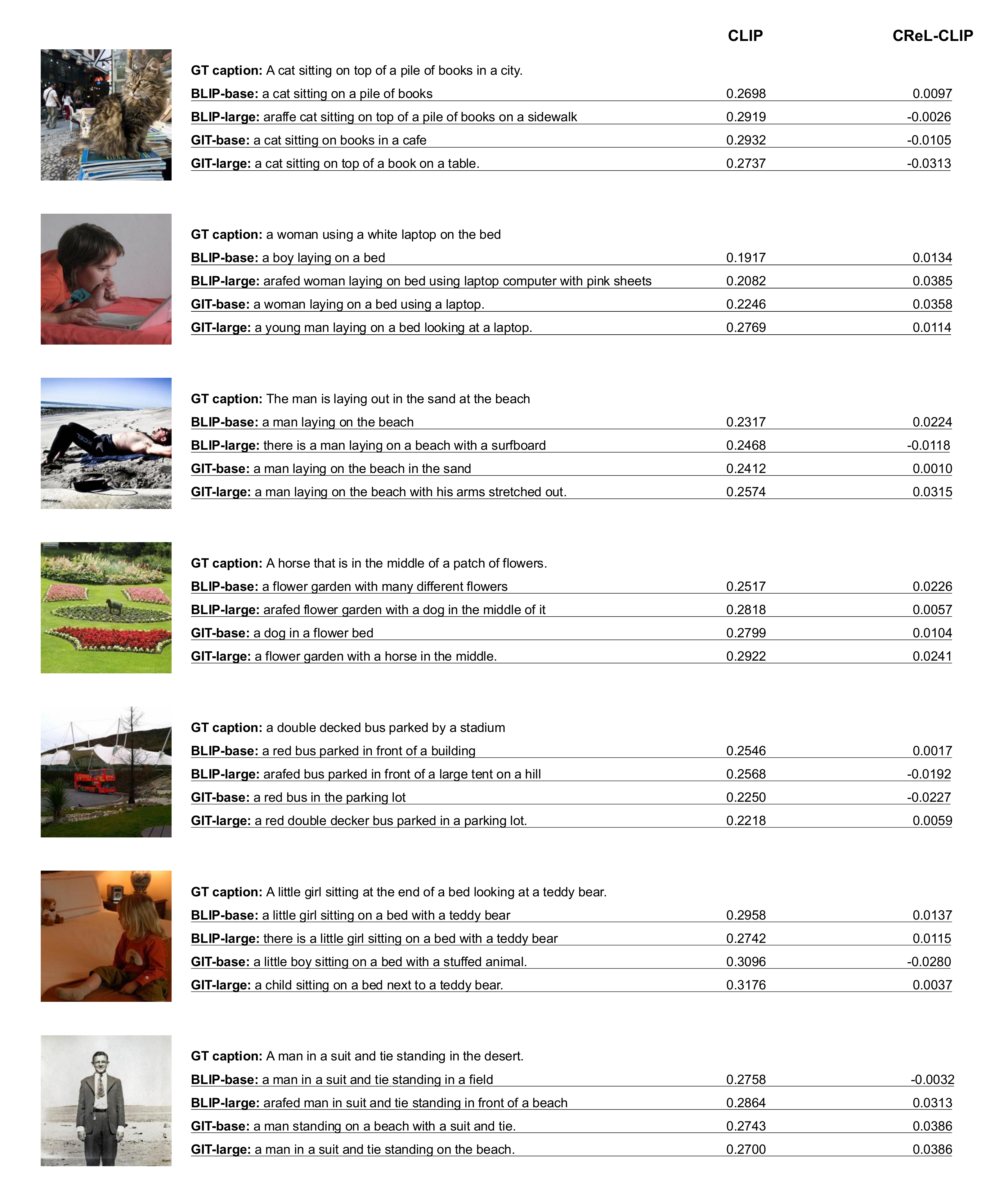}
\caption{
Qualitative results of image-to-text models ($\alpha=0.1$). \label{fig:appendix-clip}
}
\end{figure}

\begin{figure}
\centering
\includegraphics[width=\textwidth]{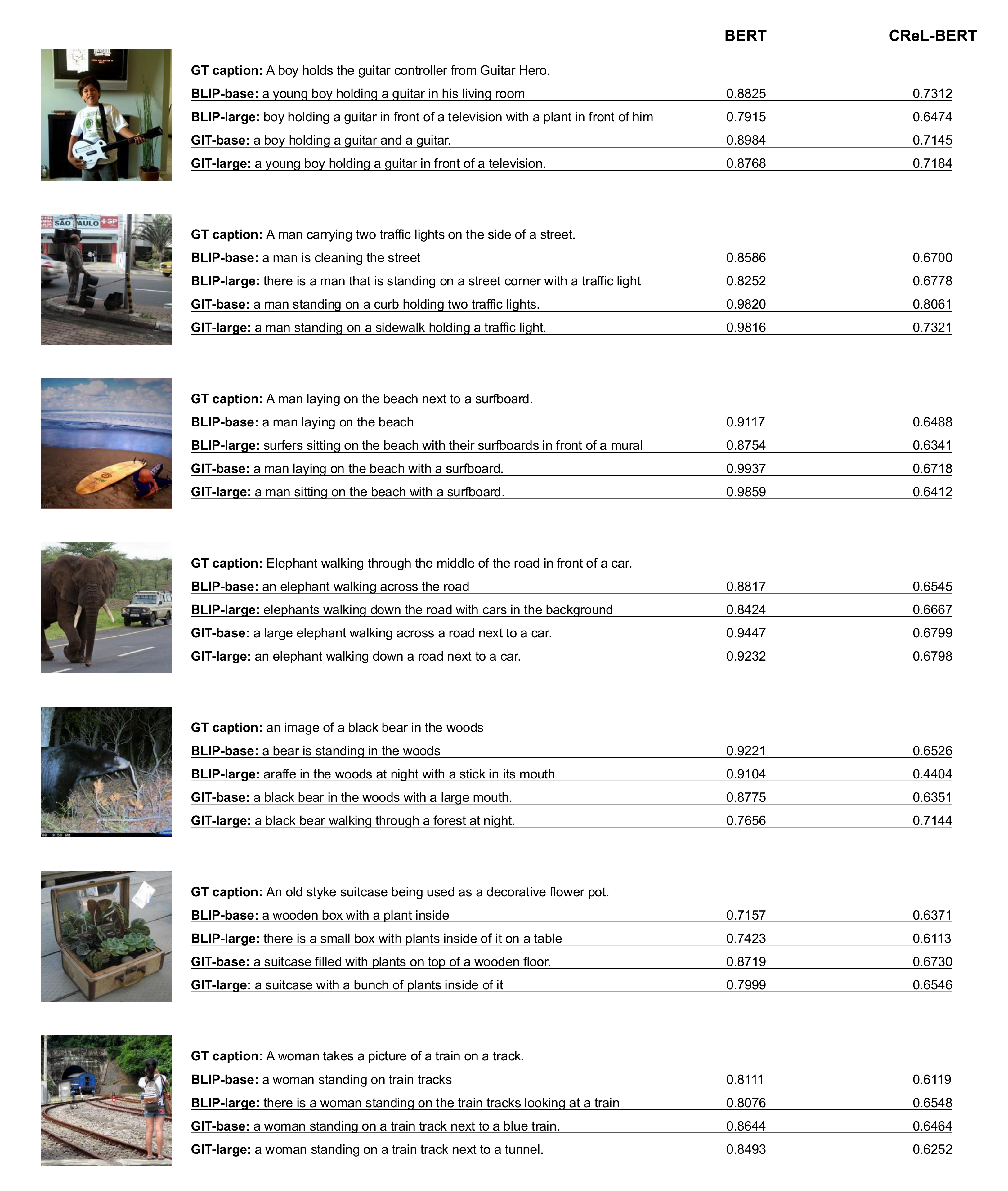}
\caption{
Qualitative results of image-to-text models ($\alpha=0.1$). \label{fig:appendix-bert}
}
\end{figure}

\subsection{{Experiments on Text-to-Image Task}}
\label{append.T2I}
{
\textbf{Dataset and preprocessing.} We use the MS-COCO 2014 validation set~\cite{lin2014microsoft} 
($40,504$ image-caption pairs), and split it into $75\%$ for VAE training, $15\%$ for DQR, $5\%$ for calibration, and $5\%$ for testing. We evaluate four models: SD3-M~\cite{esser2024scaling}, SD3.5-L~\cite{esser2024scaling}, FLUX.1-dev~\cite{flux}, and Kandinsky-2.2~\cite{kandinsky2-2}, all at image size $512 \times 512$. We use two metrics. CLIP-SIM is the cosine similarity between the CLIP text feature of the caption and the CLIP image feature of the generated image. DINO-SIM is the cosine similarity between DINOv2-base image features of the generated and ground-truth images~\cite{oquab2023dinov2}.}

{
\textbf{Implementation details.} For CLIP-SIM, image and text features are extracted using CLIP ViT-L/14~\cite{radford2021learning}, and both feature types have dimension \(p = d = 768\). For DINO-SIM, generated and ground-truth image features are extracted using DINOv2-base~\cite{oquab2023dinov2}, whose feature dimension is also \(768\).} All generation models use 50 inference steps and a guidance scale of 7.0. For VAE, we set $r = 50$ and use $\beta = 0.001$ for KL regularization. 
For DQR, the input size is \( p + d \), and each gradient step uses $1,024$ directions with \( \alpha = 0.1 \). All data are $L_2$-normalized before training. During optimization, we initialize the procedure with $50$ starting points for CLIP-SIM and $100$ starting points for DINO-SIM. More details can be found in Appendix~\ref{append.ab_num_z0}.


\begin{figure}[ht]
\centering
\begin{subfigure}{0.72\textwidth}
    \centering
    \includegraphics[width=\linewidth]{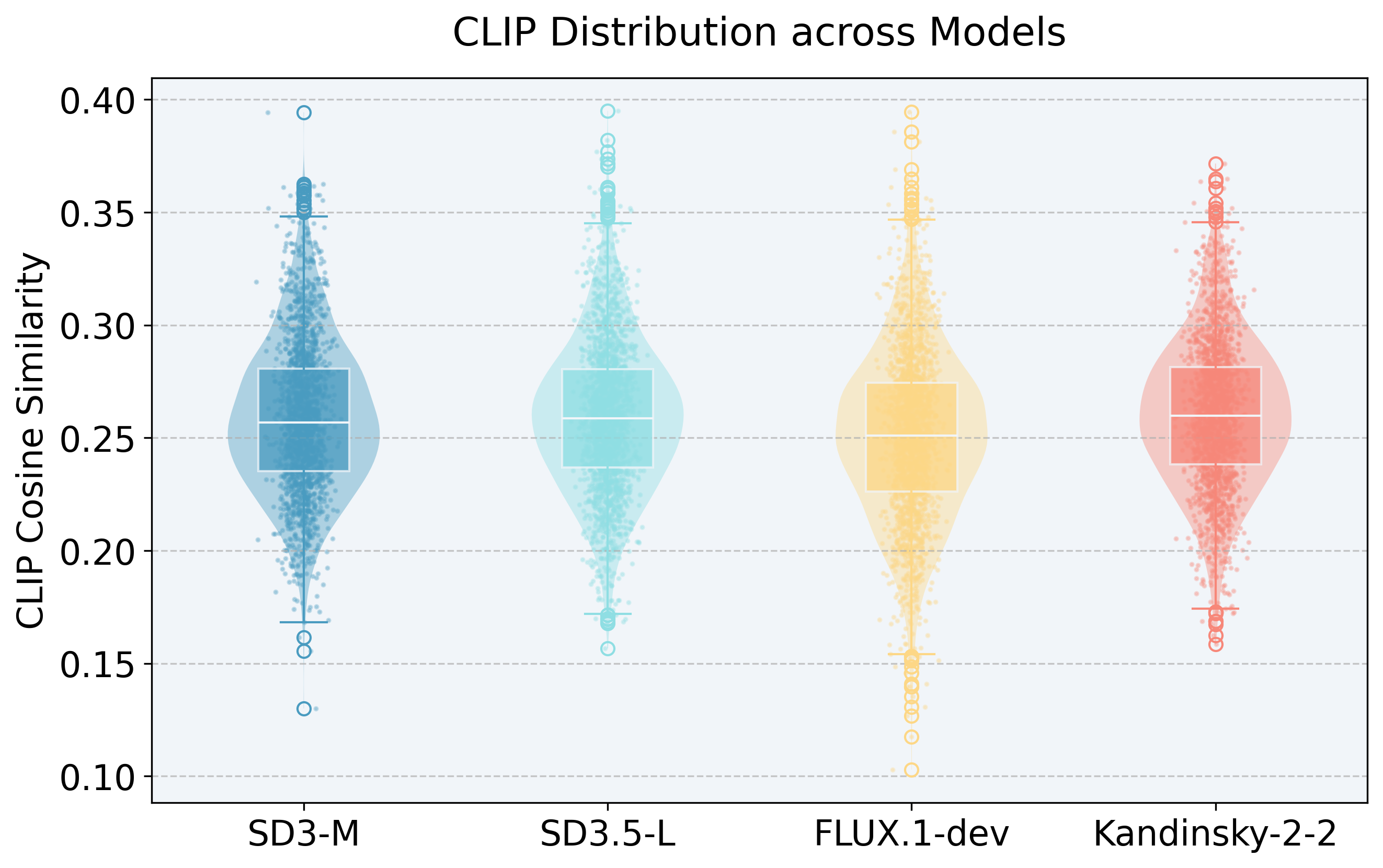}
    \caption{CLIP-SIM distributions.}
    \label{fig.t2i-rho-clip}
\end{subfigure}

\vspace{0.08in}

\begin{subfigure}{0.72\textwidth}
    \centering
    \includegraphics[width=\linewidth]{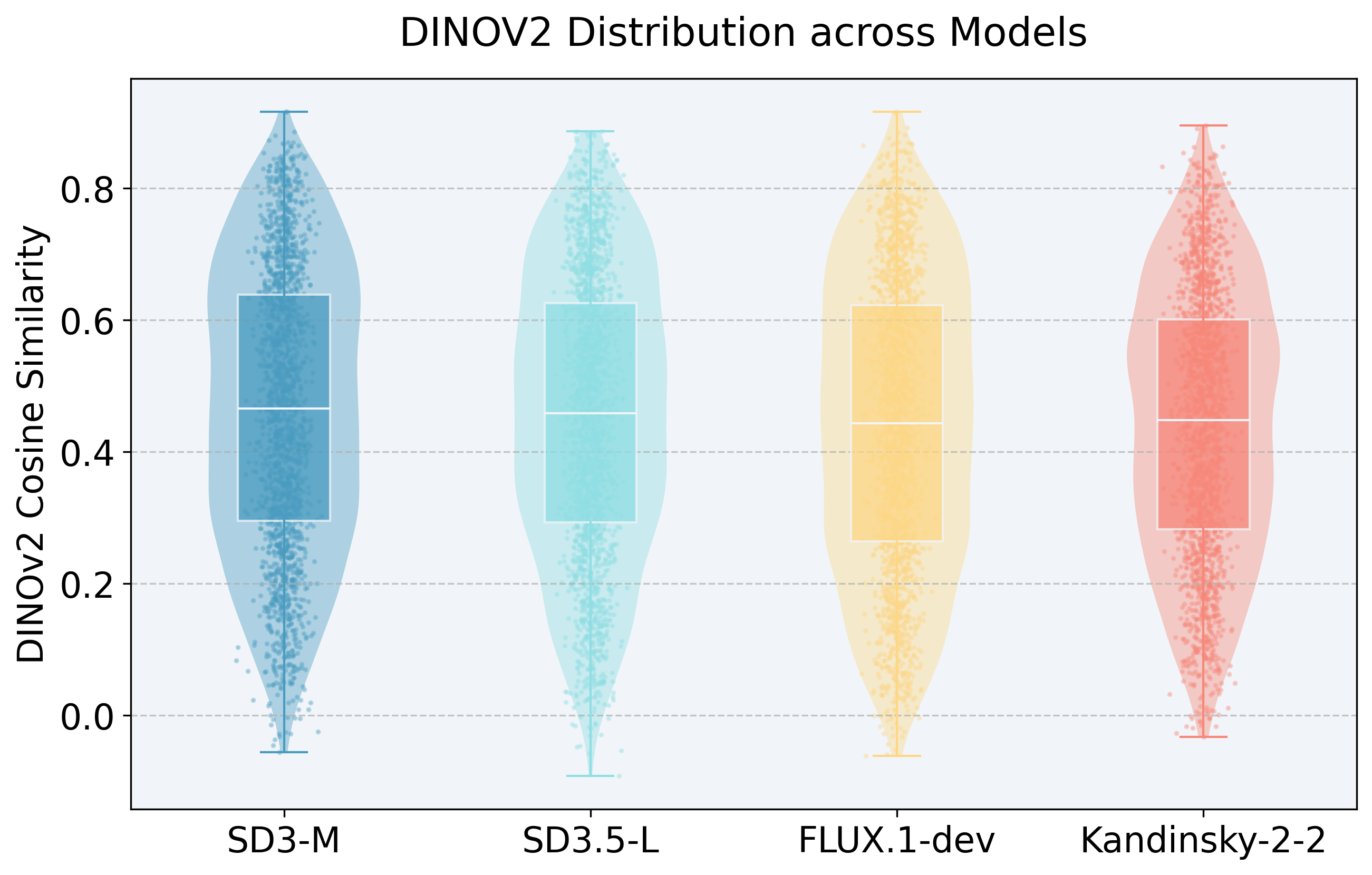}
    \caption{DINO-SIM distributions.}
    \label{fig.t2i-rho-dino}
\end{subfigure}
\caption{Score distributions across four text-to-image models under CLIP-SIM and DINO-SIM.} \label{fig.t2i-rho}
\vspace{-0.1in}
\end{figure}

\begin{figure}[ht]
    \centering
    \includegraphics[width=1\textwidth]{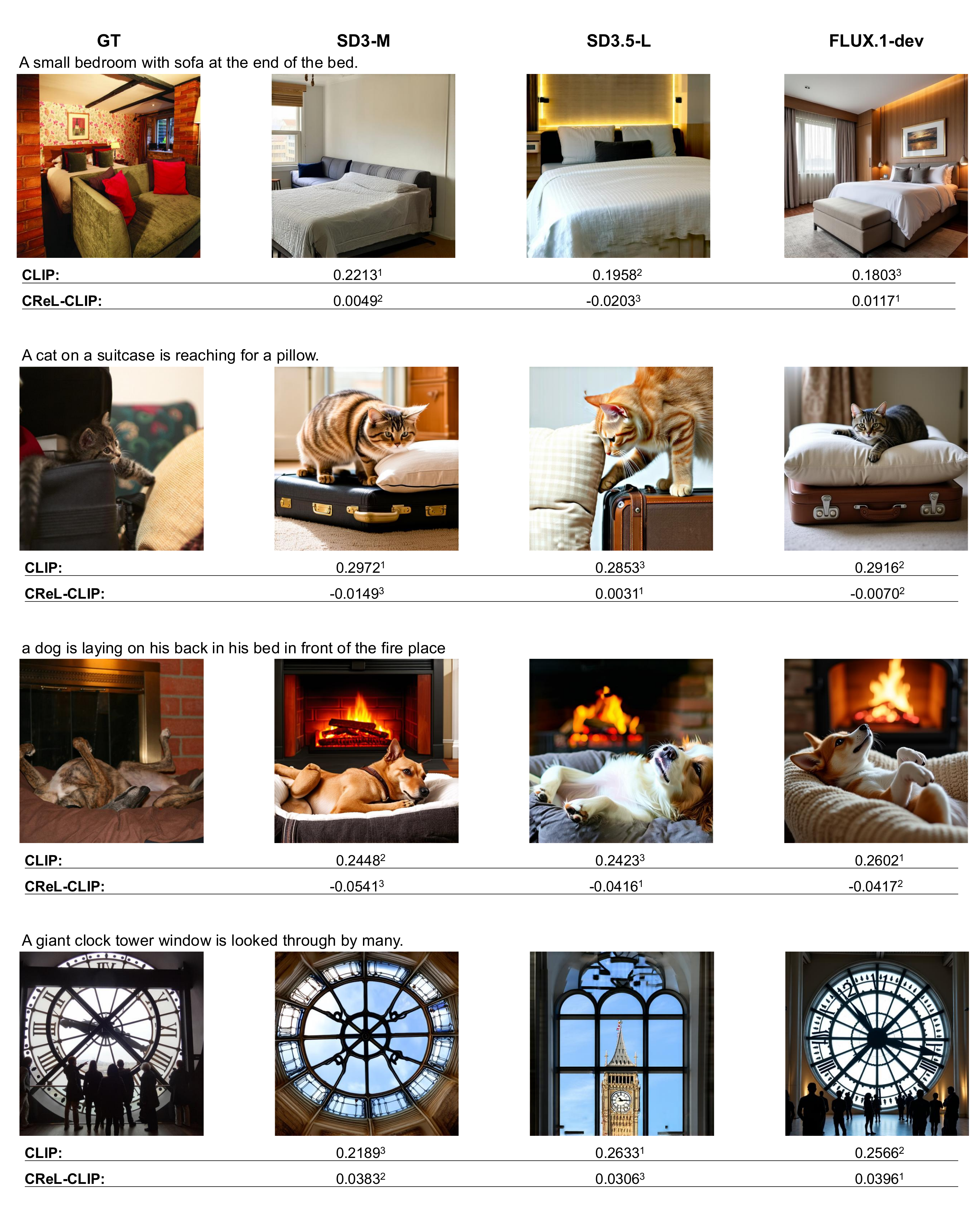}
    \caption{
    {Qualitative results of text-to-image models ($\alpha=0.1$). Superscripts denote rank.}
    \label{fig.t2i}
    }
    \vspace{-0.1in}
\end{figure}

{
\textbf{Quantitative comparison.}
We compare four text-to-image models under CLIP-SIM and DINO-SIM. 
Tab.~\ref{tab.text2img} reports the average scores and the corresponding CReL scores, while Fig.~\ref{fig.t2i-rho} visualizes the score distributions. 

For CLIP-SIM, Fig.~\ref{fig.t2i-rho-clip} shows that all four models have highly overlapping distributions. 
The box plots and violin shapes are close, especially for SD3-M, SD3.5-L, and Kandinsky-2.2. 
This means that Kandinsky-2.2's highest average CLIP score in Tab.~\ref{tab.text2img} should not be interpreted as clear dominance over the other models. 
Rather, it reflects a small advantage in the central or upper part of the CLIP distribution. 
CReL changes the comparison because it asks whether this advantage is preserved near the lower-performance boundary. 
Under this view, SD3-M obtains the highest CReL-CLIP score. 
Although SD3-M does not have the highest average CLIP score, its distribution remains competitive in the central region and does not rely on a clearly separated high-score tail. 
Thus, its semantic alignment is more stable at the calibrated reliability boundary. 
Kandinsky-2.2 drops from first under average CLIP to third under CReL-CLIP because its small average advantage does not translate into a better lower-bound reliability score. 
FLUX.1-dev stays last because its CLIP distribution is slightly shifted downward and shows a more pronounced low-score side.

For DINO-SIM, the pattern is more pronounced. 
Fig.~\ref{fig.t2i-rho-dino} shows much broader distributions than CLIP-SIM, indicating that image-level structural similarity varies substantially across samples. 
In this setting, the average score can be strongly influenced by the upper part of the distribution. 
SD3-M is the clearest example: it has many high-DINO samples and therefore achieves the highest average DINO score in Tab.~\ref{tab.text2img}. 
However, its violin plot also shows a wide spread and substantial low-score mass. 
CReL penalizes this weak lower-performance region, so SD3-M drops to the lowest CReL-DINO score. 
SD3.5-L, in contrast, has a slightly lower average DINO score but a more favorable reliability profile near the lower-performance boundary, leading to the best CReL-DINO score. 
Kandinsky-2.2 and FLUX.1-dev have lower average DINO scores; among them, Kandinsky-2.2 obtains a better CReL-DINO score, consistent with its slightly more favorable lower-side behavior in the distribution.

}

{
\textbf{Qualitative results: CReL effectively identifies misalignments.}
Figure~\ref{fig.t2i} provides qualitative examples illustrating that our calibrated metric better reflects generation reliability compared to the uncalibrated single-sample metric.
Standard metrics like CLIP often assign high scores to images that capture the general theme but miss crucial semantic details specified in the prompt.
For instance, given the prompt ``A small bedroom with sofa at the end of the bed'', only the FLUX.1-dev model correctly generates the specific spatial arrangement, yet standard CLIP ranks it lower than the failing models; in contrast, CReL correctly identifies it as the most reliable model with the highest score. 
Similarly, for ``A cat on a suitcase is reaching for a pillow'', SD3.5-L successfully depicts the ``reaching'' action while others fail, but CLIP ranks it last; CReL provides a more nuanced evaluation that better aligns with this semantic fulfillment.
Conversely, CReL effectively penalizes hallucinations or failures that CLIP misses: in the case of ``a dog is laying on his back...'', SD3-M fails to generate the correct pose but receives a high CLIP score, whereas CReL's reliability assessment reflects the risk of this semantic failure.
Finally, for ``A giant clock tower window is looked through by many'', where SD3.5-L (CLIP rank 1) generates inconsistent content and SD3-M misses key semantics, FLUX.1-dev perfectly captures the scene and is accurately ranked first by CReL.
These results demonstrate that CReL effectively detects fine-grained semantic discrepancies that standard metrics miss, quantifying model reliability without solely relying on average performance.}



\end{document}